\DeclareSymbolFont{rsfs}{U}{rsfs}{m}{n}
\DeclareSymbolFontAlphabet{\mathscrsfs}{rsfs}
\theoremstyle{plain}
\newtheorem{thm}[subsection]{Theorem}
\theoremstyle{definition}
\newtheorem{defn}{Definition}[section] 
\newtheorem{lemma}{Lemma}[section]
\newtheorem{proposition}{Proposition}[subsection]
\newcommand{\nobracket}{}
\newcommand{\nocomma}{}
\newcommand{\tmop}[1]{\ensuremath{\operatorname{#1}}}
\title{Architectural Complexity Measures of Recurrent Neural Networks}
\begin{document} 
\title{Architectural Complexity Measures of \\ Recurrent Neural Networks}

%

\author{Saizheng Zhang$^{1,}$\thanks{Equal contribution.}, Yuhuai Wu$^{2,*}$, Tong Che$^4$, Zhouhan Lin$^1$, \\
\textbf{Roland Memisevic$^{1,5}$, Ruslan Salakhutdinov$^{3,5}$ and Yoshua Bengio$^{1,5}$}\\
$^1$MILA, Universit\'e de Montr\'eal, $^2$University of Toronto, $^3$Carnegie Mellon University,\\
$^4$Institut des Hautes \'Etudes Scientifiques, France, $^5$CIFAR\\
}
\vspace{-25pt}

\maketitle
\begin{abstract} 
In this paper, we systematically analyze the connecting architectures
of recurrent neural networks (RNNs). Our main contribution is twofold:
first, we present a rigorous graph-theoretic framework describing the
connecting architectures of RNNs in general.
Second, we propose three 
architecture complexity measures of RNNs: (a) the {\bf recurrent depth}, which
captures the RNN's over-time nonlinear complexity, (b)
the {\bf feedforward depth}, which captures the local input-output nonlinearity (similar
to the ``depth'' in feedforward neural networks (FNNs)), 
and (c) the {\bf recurrent skip coefficient}
which captures how rapidly the information
propagates over time. We rigorously prove each measure's existence and computability.
Our experimental results show that RNNs might benefit
from larger recurrent depth and feedforward depth.
We further demonstrate that increasing recurrent skip coefficient
offers performance boosts on long term dependency problems.

\end{abstract}
\section{Introduction}
\vspace{-5pt}
\label{sec:intro}
Recurrent neural networks (RNNs) have been shown to achieve promising results 
on many difficult sequential learning problems \cite{graves2013generating,
bahdanau2014neural, sutskever2014sequence, nitish_video, kiros_skipthought}.
There is also much work attempting to
reveal the principles behind the challenges and successes of RNNs,
including optimization issues \cite{martens2011learning, pascanu2013difficulty}, gradient vanishing/exploding related problems
\cite{hochreiter1991untersuchungen,bengio1994learning},
 analysing/designing new RNN transition functional units
like LSTMs, GRUs and their variants \cite{hochreiter1997long,
greff2015lstm, cho2014learning, jozefowicz2015empirical}. 

This paper focuses on another important theoretical aspect of RNNs:
the connecting architecture. Ever since
\cite{schmidhuber1992learning, el1996hierarchical}
introduced  different forms of ``stacked RNNs'', researchers
have taken architecture design for granted and have paid less
attention to the exploration of other connecting architectures.
Some examples include
\cite{raiko2012deep, graves2013generating, hermans2013training}
who explored the use of skip connections;
 \cite{pascanu2013construct} who pointed out the distinction of
 constructing a ``deep'' RNN from the view of the recurrent
paths and the view of the input-to-hidden and hidden-to-output maps.
However, they did not rigorously formalize the notion of
``depth'' and its implications in ``deep'' RNNs. 
Besides ``deep'' RNNs, there still remains a vastly
unexplored field of connecting architectures.
We argue that one barrier for
better understanding the architectural complexity
is the lack of a general definition of the connecting architecture. This forced previous researchers 
to mostly consider the simple cases while
neglecting other possible connecting variations. Another barrier is the lack
of quantitative measurements of the complexity of different RNN connecting
architectures: even the concept of ``depth'' is not clear
with current RNNs.

In this paper, we try to address these two barriers.
We first introduce a general formulation of RNN connecting architectures, using 
a well-defined graph representation.
Observing that the RNN undergoes multiple transformations not only feedforwardly
(from input to output within a time step) but also recurrently
(across multiple time steps),  we carry out a quantitative analysis of
the number of transformations in these two orthogonal directions,
which results in the definitions of \emph{recurrent depth}
and \emph{feedforward depth}.
These two depths can be viewed as general extensions of the work of 
\cite{pascanu2013construct}.
We also explore a quantity called the \emph{recurrent skip coefficient}
which measures how quickly information propagates over time. This quantity
is strongly related to vanishing/exploding gradient issues, and helps deal with long term dependency problems. 
Skip connections crossing different timescales have also been 
studied by 
\cite{Lin-ieeetnn96, el1996hierarchical, sutskever2010temporal, koutnik2014clockwork}.
Instead of specific architecture design, we focus on analyzing
the graph-theoretic properties of recurrent skip coefficients, revealing the fundamental difference
between the regular skip connections and the ones which truly increase the recurrent skip coefficients. 
We rigorously prove each measure’s existence and computability under the general framework.

We empirically evaluate models with
different recurrent/feedforward depths and  
recurrent skip coefficients on
various sequential modelling tasks.
We also show 
that our experimental results further validate 
the usefulness of the proposed definitions.

\section{General Formulations of RNN Connecting Architectures}
\vspace{-5pt}
RNNs are learning machines that recursively compute new
states by applying transition functions to previous states and inputs.
Its connecting architecture
describes how information flows between different nodes.
In this section, we formalize the concept of the connecting architecture by extending the traditional graph-based illustration 
to a more general definition with a \emph{finite directed multigraph} and its \emph{unfolded} version. 
Let us first define the notion of the 
\emph{RNN cyclic graph} $\mathcal{G}_c$ that can be viewed as a cyclic graphical representation of RNNs.
We attach ``weights'' to the edges in the cyclic graph $\mathcal{G}_c$ that represent time delay differences between the
source and destination node in the unfolded graph.
\begin{defn}
\label{def:cyc}
\emph{
Let $\mathcal{G}_c = (V_c, E_c)$ be a weighted directed multigraph
\footnote{A directed multigraph is a directed graph
that allows multiple directed edges connecting two nodes.},
in which $V_c = V_{\tmop{in}} \cup V_{\tmop{out}} \cup V_{\tmop{hid}}$ is a finite nonempty
set of nodes, $E_c \subset V_c \times V_c \times \mathbb{Z}$ is a finite set of directed edges.
Each $e =(u,v,\sigma) \in E_c$ denotes a directed weighted edge pointing from node $u$ to node $v$ with an integer weight $\sigma$. Each node $v \in V_c$ is \ labelled by an integer tuple $(i, p)$. $i \in \{ 0, 2, \cdots m-1 \}$ denotes the time index of the given node, where $m$ is the \textbf{period number} of the RNN, and $p
\in S$, where $S$ is a finite set of node labels.
We call the
weighted directed multigraph $\mathcal{G}_c = (V_c, E_c)$ an RNN cyclic graph, if
}
\emph{
(1) For every edge $e =(u,v,\sigma) \in E_c$, let $i_u$ and $i_v$ denote the time index of node $u$ and $v$, then $\sigma = i_v-i_u+k\cdot m$ for some $k \in \mathbb{Z}$.
}
\emph{
(2) There exists at least one directed cycle
\footnote{A directed cycle is a closed walk with no
repetitions of edges.}
in $\mathcal{G}_c$. 
}
\emph{
(3) For any closed walk $\omega$, the sum of all the $\sigma$ along $\omega$ is not zero.
}
%
\end{defn}

Condition (1) assures that we can get a periodic graph (repeating pattern) when unfolding the RNN through time.
Condition (2) excludes feedforward neural networks in the definition by forcing to have at least one cycle
in the cyclic graph.
Condition (3) simply avoids cycles after unfolding.
The cyclic representation can be seen as a time folded representation of RNNs,
as shown in Figure \ref{fig:basic}(a). Given an RNN cyclic graph $\mathcal{G}_c$, we unfold $\mathcal{G}_c$ over time $t\in \mathbb{Z}$
by the following procedure:

\begin{defn}[\bf Unfolding]
\label{def:un}
\emph{
Given an RNN cyclic graph \ $\mathcal{G}_c = (V_c, E_c, \sigma)$, we
define a new infinite set of nodes $V_{un} = \{ (i + k m, p) | (i, p) \in V, k
\in \mathbb{Z} \} \nobracket$. The new set of edges $E_{un} \in V_{un} \times
V_{un}$ is constructed as follows: \ $((t, p), (t', p')) \in E_{un}$ if and
only if there is an edge $e = ((i, p), (i', p'), \sigma) \in E$ such that $t' -
t = \sigma$, and $t \equiv i (\tmop{mod}
m)$. The new directed graph $\mathcal{G}_{\tmop{un}} = (V_{un}, E_{un})$ is called the unfolding of $\mathcal{G}_c$. Any infinite directed graph that can be constructed from an RNN cyclic graph through unfolding is called an RNN unfolded graph.
}
\end{defn}

\begin{lemma}
\label{prop:dag}
\emph{
The unfolding $\mathcal{G}_{\tmop{un}}$ of
any RNN cyclic graph $\mathcal{G}_c$ is a directed acyclic graph (DAG).
}
\end{lemma}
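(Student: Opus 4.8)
The plan is to argue by contradiction: suppose $\mathcal{G}_{\tmop{un}}$ contains a directed cycle, and show that this cycle projects down to a closed walk in $\mathcal{G}_c$ whose edge-weights sum to zero, contradicting condition (3) of Definition \ref{def:cyc}. The whole proof rests on setting up the correct correspondence between edges of the unfolded graph and edges of the cyclic graph, and then tracking how the time coordinate changes as one traverses a walk.

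First I would fix notation for the projection. Each node of $\mathcal{G}_{\tmop{un}}$ has the form $(t,p)$ with $t \in \mathbb{Z}$ and $(i,p) \in V_c$ where $t \equiv i \pmod{m}$; I would map it to the node $(i,p) \in V_c$ obtained by reducing the time coordinate modulo $m$. The key observation, read directly off Definition \ref{def:un}, is that every unfolded edge $((t,p),(t',p')) \in E_{un}$ arises from some cyclic edge $e = ((i,p),(i',p'),\sigma) \in E_c$ with $t' - t = \sigma$ and $t \equiv i \pmod m$. So traversing an unfolded edge increments the time coordinate by exactly the weight $\sigma$ of the corresponding cyclic edge, and the projection of that unfolded edge is a genuine edge of $\mathcal{G}_c$.

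Next I would take a hypothetical directed cycle in $\mathcal{G}_{\tmop{un}}$, say a sequence of unfolded edges returning to its starting node, and project it edge-by-edge to obtain a closed walk $\omega$ in $\mathcal{G}_c$. I should check this projection really lands a \emph{closed walk}: consecutive unfolded edges share a node, and projection preserves the $p$-label and the residue class of the time index, so the projected head of one edge equals the projected tail of the next, giving a legitimate walk in $\mathcal{G}_c$; and since the unfolded cycle returns to its start, the projected walk is closed. Now I would sum the time increments: going once around the unfolded cycle the total change in the time coordinate $t$ is $0$ (we return to the same node), but by the edge correspondence this total change equals $\sum_{e \in \omega} \sigma_e$. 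Hence the weights along the closed walk $\omega$ sum to zero, directly contradicting condition (3). One should note $\omega$ is nonempty because a cycle has at least one edge, so condition (3) genuinely applies.

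The main obstacle, I expect, is the bookkeeping in the projection step rather than any deep idea: I must verify that a directed cycle in the unfolded graph really does project to a \emph{closed walk} (not merely a walk) in $\mathcal{G}_c$, being careful that repetitions of nodes or edges downstairs are permitted by condition (3) since it quantifies over all closed walks, not just simple cycles. I would also want to double-check the consistency of the residue condition $t \equiv i \pmod m$ with condition (1) of Definition \ref{def:cyc}, namely $\sigma = i_v - i_u + km$, which is precisely what guarantees that if $t \equiv i_u \pmod m$ then $t + \sigma \equiv i_v \pmod m$, so that the unfolded edge lands on a legitimately labelled node and the projection is well-defined at every step. Once this correspondence is nailed down, the contradiction with condition (3) is immediate and the DAG property follows.
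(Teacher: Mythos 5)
Your proof is correct. The paper itself never writes out a proof of this lemma---its only justification is the one-line remark that ``Condition (3) simply avoids cycles after unfolding''---and your argument is precisely the formalization of that remark: project a hypothetical directed cycle of $\mathcal{G}_{\tmop{un}}$ down to a closed walk in $\mathcal{G}_c$ (using condition (1) to check the projection is consistent at every step), observe that the net time increment around the cycle is both $0$ and $\sum_e \sigma_e$, and contradict condition (3).
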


Figure \ref{fig:basic}(a) shows an example of two graph representations $\mathcal{G}_{\tmop{un}}$ and $\mathcal{G}_c$ of a given RNN. 
Consider the edge from node $(1,7)$ going to node $(0,3)$ in $\mathcal{G}_c$. The fact that it has weight 1 indicates that the corresponding edge in $\mathcal{G}_{\tmop{un}}$ travels one time step, $((t+1,7),(t+2,3))$. Note that node $(0,3)$ also has a loop with weight 2. This loop corresponds to the edge $((t,3),(t+2,3))$. 
The two kinds of graph representations we presented above have a one-to-one correspondence.
 Also, any graph structure $\theta$ on $\mathcal{G}_{\tmop{un}}$ 
 is naturally mapped into a graph structure $\bar{\theta}$ on $\mathcal{G}_c$. 
 Given an edge tuple $\bar{e}=(u,v,\sigma)$ in $\mathcal{G}_c$, $\sigma$ stands for the number of time steps crossed by $\bar{e}$'s covering edges in $E_{un}$, i.e., for every corresponding edge $e\in \mathcal{G}_{\tmop{un}}$, $e$ must start from some time index $t$ to $t+\sigma$.
Hence $\sigma$ corresponds to the ``time delay'' associated with $e$. 
In addition, the \emph{period number}
$m$ in Definition \ref{def:cyc}
can be interpreted as the time length
of the entire non-repeated recurrent structure
in its unfolded RNN graph $\mathcal{G}_{\tmop{un}}$.
In other words, shifting the $\mathcal{G}_{\tmop{un}}$
through time by $km$ time steps will result in a 
DAG which is identical to $\mathcal{G}_{\tmop{un}}$,
and $m$ is the smallest number that has such property for $\mathcal{G}_{\tmop{un}}$.
Most traditional RNNs have $m = 1$, while some special
structures like {\em hierarchical} or {\em clockwork RNN}~\citep{el1996hierarchical, koutnik2014clockwork}
have $m>1$.
For example, Figure \ref{fig:basic}(a)
shows that the period number of this specific RNN is 2. 

\begin{figure*}[t]
\includegraphics[width=\textwidth]{./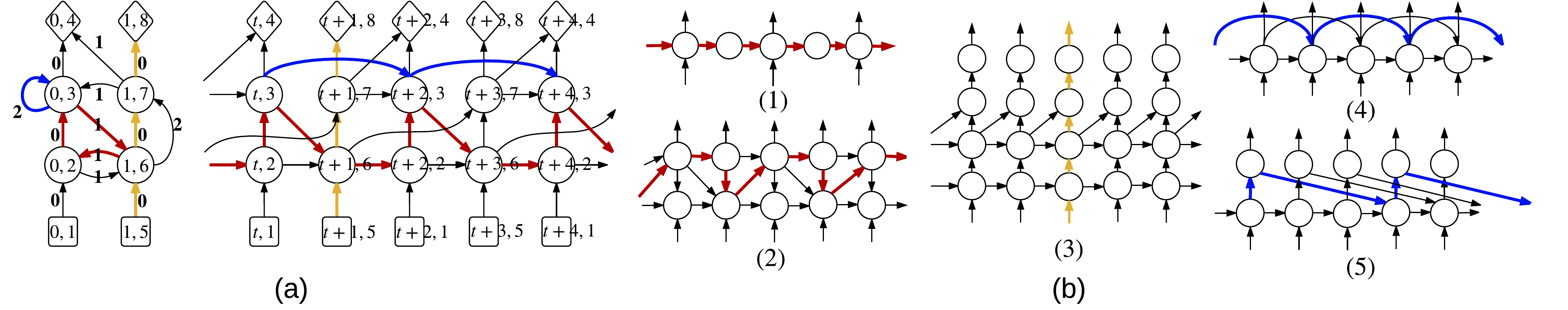}
\vspace{-15pt}
\caption{\small (a) An example of an RNN's $\mathcal{G}_c$
and $\mathcal{G}_{\tmop{un}}$. $V_{\tmop{in}}$ is denoted
by square, $V_{\tmop{hid}}$ is denoted by circle and 
$V_{\tmop{out}}$ is denoted by diamond. In $\mathcal{G}_c$,
the number on each edge is its corresponding $\sigma$. The longest path is colored in red. The longest input-output path 
is colored in yellow and the shortest path is colored blue. The value of three measures are $d_r = \frac{3}{2}$, $d_f = \frac{7}{2}$ and $s = 2$. (b) 5 more examples. (1) and (2) have
$d_r = 2, \frac{3}{2}$, (3) has $d_f = 5$, (4) and (5) has $s = 2, \frac{3}{2}$.
}
\label{fig:basic}
\vspace{-15pt}
\end{figure*}
 The connecting architecture  describes how information flows 
 among RNN units. Assume $\bar{v} \in V_c$ is a node in $\mathcal{G}_c$, let $\mathrm{In}(\bar{v})$ denotes the set of incoming nodes of $\bar{v}$, $\mathrm{In}(\bar{v}) = \{\bar{u} | (\bar{u}, \bar{v}) \in E_{c}\}$. In the forward pass of the RNN, the transition function $F_{\bar{v}}$ takes outputs of nodes $\mathrm{In}(\bar{v})$ as inputs and computes a new output. For example, vanilla RNNs units with different activation functions, LSTMs and GRUs can all be viewed as units with specific transition functions. 
We now give the general definition of an RNN:
 \begin{defn}
 \em
 An RNN is a tuple $ (\mathcal{G}_{c},\mathcal{G}_{\tmop{un}}, \{F_{\bar{v}}\}_{\bar{v}\in V_{c}})$, 
 in which $\mathcal{G}_{\tmop{un}} = (V_{un}, E_{un})$ is the unfolding of RNN cyclic graph $\mathcal{G}_c$, and $\{F_{\bar{v}}\}_{\bar{v}\in V_{c}}$ is the set of transition functions. In the forward pass, for each hidden and output node $v \in V_{un}$, the transition function $F_{\bar{v}}$ takes all incoming nodes of $v$ as the input to compute the output. 
 \end{defn}
  An RNN is \emph{homogeneous} if all the hidden nodes share the same form of the transition function.

\section{Measures of Architectural Complexity}
\vspace{-5pt}
In this section, we develop different measures of RNNs' architectural complexity, focusing mostly on the graph-theoretic properties of RNNs.
To analyze an RNN solely from its architectural aspect, we make the mild assumption that the RNN is homogeneous. 
We further assume the RNN to be unidirectional. For a bidirectional RNN, it is more natural to measure the complexities of its unidirectional components.
\vspace{-5pt}
\subsection{Recurrent Depth}
\vspace{-5pt}
Unlike feedforward models where computations are done within one time frame, RNNs map inputs to outputs over multiple time steps. In some sense, an RNN undergoes transformations along both feedforward and recurrent dimensions. This fact suggests that we should investigate its architectural complexity from these two different perspectives. We first consider the recurrent perspective.

The conventional definition of depth is the \emph{maximum} number of nonlinear transformations from inputs to outputs.
Observe that a directed path in an unfolded graph representation $G_{un}$ corresponds to a sequence of nonlinear transformations. Given an unfolded RNN graph $G_{un}$, $\forall i,n \in \mathbb{Z}$, let $\mathfrak{D}_i(n)$ be
the length of the \emph{longest} path
from any node at starting time $i$ to any node at time $i+n$.
From the recurrent perspective, it is natural to investigate how $\mathfrak{D}_i(n)$ changes over time. Generally speaking, $\mathfrak{D}_i(n)$ increases as $n$ increases for all $i$. Such
increase is caused by the recurrent structure of the RNN
which keeps adding new nonlinearities over time. 
Since $\mathfrak{D}_i(n)$ approaches $\infty$ as $n$ approaches $\infty$,\footnote{Without loss of generality, we assume the unidirectional RNN approaches positive infinity.} to measure the complexity of $\mathfrak{D}_i(n)$, we consider its asymptotic behaviour, i.e., the limit of $\frac{\mathfrak{D}_i(n)}{n}$ as $n\to \infty$. Under a mild assumption, this limit exists.
The following theorem prove such limit's computability and well-definedness: 
\begin{thm}[\bf Recurrent Depth]
\label{thm:main-main}
Given an RNN and its two graph representation $\mathcal{G}_{\tmop{un}}$ and $\mathcal{G}_c$, we denote $C (\mathcal{G}_c)$ to be the set of directed
  cycles in $\mathcal{G}_c$. For $\vartheta$ $\in C (\mathcal{G}_c)$, let  $l
  (\vartheta)$ denote the length of $\vartheta$ and $\sigma_s (\vartheta)$ denote the sum of edge weights
  $\sigma$ along $\vartheta$. Under a mild assumption\footnote{See a full treatment of the limit in general cases in Theorem \ref{thm:main} and Proposition \ref{prop:assump} in Appendix.},
  \begin{equation}
        \label{eqn:main1}
        d_r = \lim_{n \rightarrow +\infty}\frac{\mathfrak{D}_i(n)}{n} = \max_{\vartheta \in C (\mathcal{G}_c)} \frac{l (\vartheta)}{
       \sigma_s (\vartheta)}.
    \end{equation}
\end{thm}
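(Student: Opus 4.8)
The plan is to translate the question about longest paths in the infinite unfolded graph $\mathcal{G}_{\tmop{un}}$ into a purely finite question about weighted walks in $\mathcal{G}_c$, and then to show that the asymptotic length-per-time-step is governed by the best cycle. First I would set up the correspondence implied by Definition \ref{def:un}: every directed path in $\mathcal{G}_{\tmop{un}}$ from a node at time $i$ to a node at time $i+n$ projects onto a walk in $\mathcal{G}_c$ whose number of edges equals the path length and whose edge weights $\sigma$ sum to exactly $n$; conversely, a walk in $\mathcal{G}_c$ of weight sum $n$ starting at a node of time index $\equiv i \pmod m$ lifts to such a path. Hence $\mathfrak{D}_i(n)$ equals the maximal number of edges among all walks in $\mathcal{G}_c$ of weight sum $n$ with an admissible starting time index. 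I would also record that by Condition (3) of Definition \ref{def:cyc} every directed cycle $\vartheta$ has $\sigma_s(\vartheta)\neq 0$, and under the positive-direction convention of the footnote $\sigma_s(\vartheta)\geq 1$, so the ratios $l(\vartheta)/\sigma_s(\vartheta)$ are well defined and, there being only finitely many simple cycles, the maximum $r^{*}:=\max_{\vartheta} l(\vartheta)/\sigma_s(\vartheta)$ is attained.

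For the upper bound I would use a cycle-decomposition argument. Take a walk $W$ in $\mathcal{G}_c$ realizing $\mathfrak{D}_i(n)$ and repeatedly excise a simple cycle whenever a vertex is revisited, leaving a simple path $P$ (of length $<|V_c|$) together with simple cycles $\vartheta_1,\dots,\vartheta_k$; both length and weight are additive under this decomposition. Since each $l(\vartheta_j)\leq r^{*}\sigma_s(\vartheta_j)$ and $\sigma_s(P)$ is bounded, summing gives $\mathfrak{D}_i(n)=l(P)+\sum_j l(\vartheta_j)\leq r^{*}\bigl(n-\sigma_s(P)\bigr)+l(P)\leq r^{*}n+C_1$ for a constant $C_1$ independent of $n$. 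Dividing by $n$ yields $\limsup_{n}\mathfrak{D}_i(n)/n\leq r^{*}$, uniformly in $i$ since only finitely many residues $i \bmod m$ occur.

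For the matching lower bound I would fix an optimal cycle $\vartheta^{*}$ of length $L$ and weight $S$ with $L/S=r^{*}$, pick a vertex $w$ on it, and build a walk that reaches $w$ from a time-$i$ node along a fixed connecting path, loops around $\vartheta^{*}$ about $n/S$ times, and then runs a short corrective segment so that the total weight is exactly $n$. This produces a walk of length at least $(n/S)L-C_2=r^{*}n-C_2$, giving $\liminf_{n}\mathfrak{D}_i(n)/n\geq r^{*}$ and hence the claimed limit. The main obstacle is precisely this last step: because every edge weight is positive one cannot freely decrease the accumulated weight, so hitting the target $n$ exactly --- i.e.\ absorbing the residue $n \bmod S$ --- requires that corrective segments of every needed weight actually exist in $\mathcal{G}_c$. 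This is exactly the content of the mild assumption (cf.\ Proposition \ref{prop:assump}); I would either invoke it to guarantee such segments for all large $n$, or, in its absence, first establish the limit along the progression $n\equiv 0 \pmod S$ and then transfer it to all $n$ using the near-monotonicity of $\mathfrak{D}_i$ (extending an optimal walk by a bounded segment changes its length by $O(1)$). The upper bound is routine; essentially all of the delicacy lives in making the lower-bound construction land on every sufficiently large $n$.
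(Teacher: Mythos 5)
Your proposal is correct and follows essentially the same route as the paper's own proof: the paper likewise projects unfolded paths onto walks in $\mathcal{G}_c$, proves the upper bound by decomposing such a walk into a bounded path plus simple directed cycles each satisfying $l(\vartheta)\leq r^{*}\sigma_s(\vartheta)$, obtains the lower bound by looping around an optimal cycle, and uses the mild assumption (Proposition \ref{prop:assump}) exactly as you do --- to guarantee the looping construction can terminate at every sufficiently large time offset $n$ rather than only at multiples of $\sigma_s(\vartheta^{*})$. One caveat: your fallback of transferring the limit from the subsequence $n\equiv 0 \pmod{\sigma_s(\vartheta^{*})}$ to all $n$ by ``near-monotonicity'' would fail in general without the assumption (the paper only asserts a limsup identity, Theorem \ref{thm:main}, in that case), but since the theorem is stated under the assumption this aside does not affect the validity of your argument.
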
 
\vspace{-5pt}
More intuitively, $d_r$ is a measure of the average maximum number of nonlinear transformations per time step as $n$ gets large. Thus, we call it \emph{recurrent depth}:
\begin{defn}[\textbf{Recurrent Depth}]
\emph{
Given an RNN and its two graph representations $\mathcal{G}_{\tmop{un}}$ and $\mathcal{G}_c$,
we call $d_r$, defined in Eq.(\ref{eqn:main1}), the recurrent depth of the RNN.}
\end{defn}

In Figure \ref{fig:basic}(a), one can easily verify that $\mathfrak{D}_t(1)=5$, $\mathfrak{D}_t(2)=6$, $\mathfrak{D}_t(3)=8$, $\mathfrak{D}_t(4)=9$ $\dots$ Thus $\frac{\mathfrak{D}_t(1)}{1} = 5$, $\frac{\mathfrak{D}_t(2)}{2} = 3$, $\frac{\mathfrak{D}_t(3)}{3} = \frac{8}{3}$, $\frac{\mathfrak{D}_t(4)}{4} = \frac{9}{4}$ $\dots$., which eventually converges to $\frac{3}{2}$ as $n \to \infty$.
As $n$ increases, most parts of the longest path coincides with the path colored in red. As a result, $d_r$ coincides with the number of nodes the red path goes through per time step. Similarly in $\mathcal{G}_c$, observe that the red cycle achieves the maximum ($\frac{3}{2}$) in Eq.(\ref{eqn:main1}). Usually, one can directly calculate $d_r$ from $\mathcal{G}_{\tmop{un}}$. 
It is easy to verify that \emph{simple RNNs} and \emph{stacked RNNs} share the same recurrent depth which is equal to 1. This reveals the fact that their nonlinearities increase at the same rate, which suggests that 
they will behave similarly in the long run. This fact is often neglected, since one would 
typically consider the number of layers as a measure of depth, and think of stacked RNNs as ``deep'' and simple RNNs as ``shallow'', even though their discrepancies are not due to 
recurrent depth (which regards time) but due to feedforward depth, defined next.
\vspace{-5pt}
\subsection{Feedforward Depth}
\vspace{-5pt}
Recurrent depth does not fully characterize the nature of nonlinearity of an RNN. 
As previous work suggests \citep{sutskever2014sequence}, stacked
RNNs do outperform shallow ones with the same hidden size on problems where a more immediate
input and output process is modeled. This is not surprising, since the growth rate of $\mathfrak{D}_i(n)$ only captures the number of nonlinear transformations in the time direction, not in the feedforward direction.
The perspective of feedforward computation puts more emphasis on the specific paths connecting inputs to outputs. Given an RNN unfolded graph $G_{un}$, let $\mathfrak{D}^*_i(n)$ be the length of the longest path from any input node at time step~$i$ to any output node at time step $i+n$. Clearly, when $n$ is small, the recurrent depth cannot serve as a good description for $\mathfrak{D}^*_i(n)$. In fact. it heavily depends on another quantity which we call \emph{feedforward depth}. The following proposition guarantees
the existence of such a quantity and demonstrates the role of both measures in quantifying the nonlinearity of an RNN. 


\begin{proposition}[\textbf{Input-Output Length Least Upper Bound}]
\label{prop:sup}
\em
Given an RNN with recurrent depth $d_r$, we denote 
$d_f = \sup_{i,n \in \mathbb{Z}}{\mathfrak{D}^*_i(n)} - n\cdot d_r$,
the supremum $d_f$ exists and thus we have the following upper bound for $\mathfrak{D}^*_i(n)$:
\vspace{-3pt}
\begin{equation*}
\mathfrak{D}^*_i(n)  \leq n\cdot d_r +d_f.
\end{equation*}
\end{proposition}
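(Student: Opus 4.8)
The plan is to reduce the statement to a purely combinatorial fact about walks in the \emph{finite} graph $\mathcal{G}_c$, exploiting that the recurrent depth $d_r = \max_{\vartheta \in C(\mathcal{G}_c)} l(\vartheta)/\sigma_s(\vartheta)$ from Theorem \ref{thm:main-main} is exactly the maximum cycle ratio. Since $d_f$ is the supremum of a nonempty set of real numbers (any input--output pair joined by a path contributes one value), it suffices to exhibit a single constant $C$, depending only on $\mathcal{G}_c$ and not on $i,n$, with $\mathfrak{D}^*_i(n) - n\,d_r \le C$ for all $i,n$. Finiteness of $d_f$ and the claimed bound $\mathfrak{D}^*_i(n) \le n\,d_r + d_f$ then follow at once from the definition of the supremum.

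First I would fix a longest input--output path $P$ in $\mathcal{G}_{\tmop{un}}$ realizing $\mathfrak{D}^*_i(n)$; this is finite because $\mathcal{G}_{\tmop{un}}$ is a locally finite DAG by Lemma \ref{prop:dag}, and under the unidirectional assumption every directed cycle of $\mathcal{G}_c$ has positive weight-sum, which bounds how far any path can wander backward in time and hence forces finitely many edges between two fixed time indices. Projecting $P$ down to $\mathcal{G}_c$ yields a walk $W$ whose number of edges equals $\mathfrak{D}^*_i(n)$ and whose total edge weight equals the net time delay $n$. The central step is then the standard cycle decomposition of a walk: by repeatedly excising closed subwalks at repeated vertices, I would write $W$ as a simple path $Q$ in $\mathcal{G}_c$ together with a multiset of directed cycles $\vartheta_1,\dots,\vartheta_K$, so that both the length (edge count) and the weight-sum $\sigma_s$ are additive across $Q$ and the $\vartheta_j$.

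Next I would apply the defining inequality of $d_r$ to each cycle. Because the RNN is unidirectional, $\sigma_s(\vartheta_j) > 0$, so $l(\vartheta_j)/\sigma_s(\vartheta_j) \le d_r$ gives $l(\vartheta_j) \le d_r\,\sigma_s(\vartheta_j)$ with the correct sign. Summing and using additivity,
\[
\mathfrak{D}^*_i(n) = l(Q) + \sum_{j} l(\vartheta_j) \le l(Q) + d_r\sum_j \sigma_s(\vartheta_j) = l(Q) + d_r\bigl(n - \sigma_s(Q)\bigr) = n\,d_r + \bigl(l(Q) - d_r\,\sigma_s(Q)\bigr).
\]
It then remains to bound the remainder $l(Q) - d_r\,\sigma_s(Q)$. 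Since $Q$ is a simple path in the finite graph $\mathcal{G}_c$, it visits each node at most once, so $l(Q) \le |V_c|$ and $|\sigma_s(Q)| \le |V_c|\,\sigma_{\max}$, where $\sigma_{\max}$ is the largest absolute edge weight in $\mathcal{G}_c$; hence $l(Q) - d_r\,\sigma_s(Q) \le |V_c|\,(1 + d_r\,\sigma_{\max}) =: C$, independent of $i$ and $n$, which closes the argument.

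The main obstacle I anticipate is not the estimate itself but the bookkeeping that makes the cycle decomposition rigorous: I must ensure the excision procedure terminates and that length and weight-sum stay \emph{exactly} additive, and I must verify that unidirectionality genuinely forces $\sigma_s(\vartheta_j) > 0$ for every extracted cycle, so that the max-ratio inequality is applied in the correct direction and no cycle contributes net-negative delay. Care is also needed to confirm that $\mathfrak{D}^*_i(n)$ is finite in the first place, i.e.\ that a longest input--output path exists; this again rests on positivity of all cycle weights controlling the backward-in-time excursions of any path.
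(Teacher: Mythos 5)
Your proof is correct, but it routes the argument differently from the paper. The paper's own proof of Proposition \ref{prop:sup} performs no new decomposition: it notes that $d_f(i)=\sup_{n}\mathfrak{D}^*_i(n)-n\cdot d_r$ is $m$-periodic in $i$, uses the trivial comparison $\mathfrak{D}^*_i(n)\le \mathfrak{D}_i(n)$ (an input--output path is in particular a path), and then quotes the estimate $\mathfrak{D}_i(n)\le n\cdot d_r + M'+\Gamma$ (constants depending only on $\mathcal{G}_c$) already obtained inside the proof of Theorem \ref{thm:main}, which gives $\limsup_{n\to\infty}\left(\mathfrak{D}^*_i(n)-n\cdot d_r\right)\le M'+\Gamma$ and hence finiteness of the supremum. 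You instead re-derive the decomposition of the projected input--output walk into a simple path plus directed cycles and bound each cycle by $l(\vartheta_j)\le d_r\,\sigma_s(\vartheta_j)$; this is precisely the computation the paper saves for its proof of Theorem \ref{thm:feed}, where the same remainder $l_0 - d_r\cdot\sigma_s(\gamma_0)$ appears and is then maximized over the finitely many directed input--output paths of $\mathcal{G}_c$. Your route is self-contained, yields an explicit uniform constant $|V_c|\left(1+d_r\sigma_{\max}\right)$ with uniformity in $i$ automatic (no periodicity reduction needed), and as a by-product establishes the upper-bound half of Theorem \ref{thm:feed}; the paper's route is shorter because the heavy lifting is cited from Theorem \ref{thm:main} rather than repeated. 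The two caveats you flag are genuine but are covered by the paper's standing assumptions: positivity of $\sigma_s$ on every directed cycle, needed both for the sign of the max-ratio inequality and for the finiteness of $\mathfrak{D}^*_i(n)$, follows from the unidirectionality assumption together with condition (3) of Definition \ref{def:cyc}, a point the paper itself leaves implicit.
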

\vspace{-3pt}
The above upper bound explicitly shows the interplay between recurrent depth and feedforward depth: when $n$ is small, $\mathfrak{D}^*_i(n)$ is largely bounded by $d_f$; when $n$ is large, $d_r$ captures the nature of the bound ($\approx{n\cdot d_r}$). These two measures are equally important, as they separately capture the maximum number of nonlinear transformations of an RNN in the long run and in the short run. 

\begin{defn}
(\textbf{Feedforward Depth})\emph{
Given an RNN with recurrent depth $d_r$ and its two graph representations $\mathcal{G}_{\tmop{un}}$ and $\mathcal{G}_c$, we call $d_f$, defined in Proposition~\ref{prop:sup}, the feedforward depth\footnote{Conventionally, an architecture with depth 1 is a three-layer architecture containing one hidden layer. But in our definition, since it goes through two transformations, we count the depth as 2 instead of 1. This should be particularly noted with the concept of feedforward depth, which can be thought as the conventional depth plus 1.} of the RNN.}
\end{defn}
\vspace{-7pt}
The following theorem proves $d_f$'s computability:
\begin{thm}[\textbf{Feedforward Depth}]
\label{thm:feed}
Given an RNN and its two graph representations $\mathcal{G}_{\tmop{un}}$ and
  $\mathcal{G}_c$, we denote $\xi (\mathcal{G}_c)$ the set of directed paths 
  that start at an input node and end at an output node in $\mathcal{G}_c$.
  For $\gamma \in \xi(\mathcal{G}_c)$, denote $l (\gamma)$ the length and
  $\sigma_s (\gamma)$ the sum of $\sigma$ along $\gamma$. Then we have:
  \begin{equation*}
      d_f = \sup_{i, n \in \mathbb{Z}} \mathfrak{D}_i^{\ast} (n) - n \cdot d_r
     = \max_{\gamma \in \xi (\mathcal{G}_c)} l (\gamma) - \sigma_s (\gamma)
     \cdot d_r,
  \end{equation*}
where $m$ is the period number and $d_r$ is the recurrent depth of the RNN. 
\end{thm}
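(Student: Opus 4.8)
The plan is to transport the quantity $\mathfrak{D}^*_i(n)$, which lives on the infinite unfolded DAG $\mathcal{G}_{\tmop{un}}$, down to a finite optimization on the cyclic multigraph $\mathcal{G}_c$, and then to separate the contribution of a simple input-output path from the contribution of cycles, the latter being controlled by the recurrent depth $d_r$ via Theorem \ref{thm:main-main}. First I would establish the correspondence supplied by Definition \ref{def:un}: every directed edge of $\mathcal{G}_{\tmop{un}}$ is the image of a unique edge $(\bar u,\bar v,\sigma)$ of $\mathcal{G}_c$ and advances the time index by exactly $\sigma$. Consequently any directed path $P$ in $\mathcal{G}_{\tmop{un}}$ from an input node at time $i$ to an output node at time $i+n$ projects, edge by edge, onto a walk $W$ in $\mathcal{G}_c$ that starts at an input node, ends at an output node, has $l(W)=l(P)$, and satisfies $\sigma_s(W)=n$ (the total time advanced). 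Conversely, since $\mathcal{G}_{\tmop{un}}$ is periodic with period $m$, any such walk $W$ whose initial node has time index $i \bmod m$ lifts back to a path of the same length in $\mathcal{G}_{\tmop{un}}$, running from time $i$ to time $i+\sigma_s(W)$. Hence $\mathfrak{D}^*_i(n)$ is the maximal length of an input-to-output walk in $\mathcal{G}_c$ of total weight $n$.

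For the upper bound I would use the standard decomposition of a walk with fixed endpoints into one simple path plus cycles: repeatedly excising a closed sub-walk whenever a vertex is revisited, and further splitting each excised closed walk into simple directed cycles, rewrites the edge multiset of $W$ as a simple input-output path $\gamma\in\xi(\mathcal{G}_c)$ together with cycles $\vartheta_1,\dots,\vartheta_r\in C(\mathcal{G}_c)$. Both length and weight are additive under this decomposition, so $l(W)-\sigma_s(W)\,d_r=\bigl(l(\gamma)-\sigma_s(\gamma)\,d_r\bigr)+\sum_{j}\bigl(l(\vartheta_j)-\sigma_s(\vartheta_j)\,d_r\bigr)$. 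Under the unidirectional assumption every directed cycle advances time strictly forward, so $\sigma_s(\vartheta_j)>0$ (Condition (3) of Definition \ref{def:cyc} already forbids $\sigma_s=0$), and the defining maximality $d_r=\max_{\vartheta}l(\vartheta)/\sigma_s(\vartheta)$ from Theorem \ref{thm:main-main} gives $l(\vartheta_j)-\sigma_s(\vartheta_j)\,d_r\le 0$ for each $j$. Therefore $\mathfrak{D}^*_i(n)-n\,d_r=l(W)-\sigma_s(W)\,d_r\le l(\gamma)-\sigma_s(\gamma)\,d_r\le\max_{\gamma\in\xi(\mathcal{G}_c)}\bigl(l(\gamma)-\sigma_s(\gamma)\,d_r\bigr)$, and taking the supremum over $i,n$ yields $d_f\le\max_{\gamma}\bigl(l(\gamma)-\sigma_s(\gamma)\,d_r\bigr)$.

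For achievability I would take a maximizer $\gamma^*$ of $l(\gamma)-\sigma_s(\gamma)\,d_r$, choose $i$ congruent modulo $m$ to the time index of its starting node, set $n=\sigma_s(\gamma^*)$, and lift $\gamma^*$ to a concrete input-to-output path of length $l(\gamma^*)$ in $\mathcal{G}_{\tmop{un}}$. This witnesses $\mathfrak{D}^*_i(n)\ge l(\gamma^*)$, so $\mathfrak{D}^*_i(n)-n\,d_r\ge l(\gamma^*)-\sigma_s(\gamma^*)\,d_r$; combined with the upper bound this forces equality and shows the supremum is in fact attained, consistent with Proposition \ref{prop:sup}.

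I expect the main obstacle to be making the walk-to-path-plus-cycles decomposition fully rigorous while guaranteeing that the residual $\gamma$ is genuinely a \emph{simple} path still joining an input node to an output node, and that each excised piece lies in $C(\mathcal{G}_c)$ with strictly positive weight. The positivity, which is exactly what lets the cycle terms be discarded, is the point where the unidirectional assumption is indispensable; without it a cycle with $\sigma_s(\vartheta_j)<0$ could make the corresponding term positive and break the bound. By contrast, the lifting step is routine once the correspondence is in place, since the supremum over $i$ absorbs the only real constraint, namely matching time indices modulo $m$.
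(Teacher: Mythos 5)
Your proposal is correct and takes essentially the same route as the paper's own proof: project the input-to-output path in $\mathcal{G}_{\tmop{un}}$ onto a walk in $\mathcal{G}_c$, decompose that walk into a simple input-output path plus directed cycles, discard each cycle term via $l(\vartheta)\leq d_r\cdot\sigma_s(\vartheta)$, and attain equality by lifting the maximizing path of $\xi(\mathcal{G}_c)$ back to $\mathcal{G}_{\tmop{un}}$. The only difference is cosmetic: you make explicit the positivity of cycle weights $\sigma_s(\vartheta_j)>0$ (needed to multiply through the inequality defining $d_r$), which the paper leaves implicit in its unidirectionality assumption.
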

For example, in Figure \ref{fig:basic}(a), one can easily verify that
$d_f=\mathfrak{D}^*_t(0)=3$. Most commonly, $d_f$ is the same as 
$ \mathfrak{D}^*_t(0)$, i.e., the maximum length from an input to its
current output. 
\vspace{-2pt}
\subsection{Recurrent Skip Coefficient}
\vspace{-5pt}
Depth provides a measure of the complexity of the model.
But such a measure is not sufficient to characterize behavior on long-term dependency
tasks. In particular, since models with large recurrent depths have more nonlinearities through time, gradients can explode or vanish more easily. 
On the other hand, it is known 
that adding skip connections across multiple time steps
may help improve the performance on long-term dependency problems 
\cite{Lin-ieeetnn96, sutskever2010temporal}.
To measure such a ``skipping'' effect, we should instead pay attention to the 
length of the {\em shortest path} from time $i$ to time $i+n$. 
In $G_{un}$, $\forall i,n \in \mathbb{Z}$,
let $\mathfrak{d}_i(n)$ be the length of the shortest path.  
Similar to the recurrent depth, we consider the growth rate of $\mathfrak{d}_i(n)$.

\begin{thm}[\textbf{Recurrent Skip Coefficient}]
\label{thm:rsc}
Given an RNN and its two graph representations $\mathcal{G}_{\tmop{un}}$ and $\mathcal{G}_c$, under mild assumptions\footnote{See Proposition \ref{prop:assump2} in Appendix.}
\vspace{-7pt}
\begin{equation}
\label{eqn:rsc}
j = \lim_{n \rightarrow +\infty}\frac{\mathfrak{d}_i(n)}{n} = \min_{\vartheta \in C (\mathcal{G}_c)} \frac{l (\vartheta)}{
       \sigma_s (\vartheta)}. 
\end{equation}
\end{thm}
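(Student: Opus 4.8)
The plan is to mirror the proof of Theorem~\ref{thm:main-main} (recurrent depth), replacing ``longest'' by ``shortest'' and the maximizing cycle by the minimizing one, with all inequalities reversed. The fundamental correspondence I would exploit is the same: a directed path in $\mathcal{G}_{\tmop{un}}$ from a node at time $i$ to a node at time $i+n$ projects onto a directed walk $\omega$ in $\mathcal{G}_c$, the number of edges of the path equals $l(\omega)$, and the number of time steps it crosses equals $\sigma_s(\omega)$, the sum of edge weights along $\omega$. Hence $\mathfrak{d}_i(n)$ is exactly the minimum edge-count among walks in $\mathcal{G}_c$ of total weight $n$ that respect the time-index congruences (start index $\equiv i$, end index $\equiv i+n \pmod m$), and the theorem becomes a statement about the growth rate of this combinatorial minimum. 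Throughout I would use that Conditions (2) and (3) together with unidirectionality force $\sigma_s(\vartheta) > 0$ on every directed cycle $\vartheta$, so the ratios $l(\vartheta)/\sigma_s(\vartheta)$ are well defined and positive; write $j$ for their minimum.

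For the lower bound $\liminf_n \mathfrak{d}_i(n)/n \ge j$, I would take any walk $\omega$ realizing a path from time $i$ to $i+n$ and decompose it into a collection of directed cycles $\vartheta_1,\dots,\vartheta_r$ plus a simple (vertex-nonrepeating) acyclic remainder $\rho$, peeling off a cycle whenever a vertex repeats. Because $\rho$ is simple, its length and weight are both bounded by constants depending only on $|V_c|$ and $\max_e|\sigma|$. Each cycle satisfies $l(\vartheta_k) \ge j\,\sigma_s(\vartheta_k)$ by definition of $j$, so summing gives $l(\omega) = \sum_k l(\vartheta_k) + l(\rho) \ge j\sum_k \sigma_s(\vartheta_k) \ge j\,n - C$ for a constant $C$ absorbing the bounded remainder. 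Minimizing over $\omega$ yields $\mathfrak{d}_i(n) \ge j\,n - C$.

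For the upper bound $\limsup_n \mathfrak{d}_i(n)/n \le j$, I would fix a minimizing cycle $\vartheta^\ast$ with $l(\vartheta^\ast)/\sigma_s(\vartheta^\ast) = j$ and construct an explicit short path: traverse $\vartheta^\ast$ about $\lfloor n/\sigma_s(\vartheta^\ast)\rfloor$ times to accumulate most of the required $n$ time steps at cost $\approx j\,n$ edges, then attach bounded-length connecting segments --- one from a suitable node at time $i$ into $\vartheta^\ast$, one leaving $\vartheta^\ast$ to reach a node at time exactly $i+n$, and a short correction handling the residue $n \bmod \sigma_s(\vartheta^\ast)$ together with the congruence classes mod $m$. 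This produces a path of length at most $j\,n + C'$, so $\mathfrak{d}_i(n) \le j\,n + C'$. Combining the two bounds gives the stated limit.

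The hard part will be the gluing in the upper bound: ensuring that from the prescribed start node one can actually reach $\vartheta^\ast$, and from $\vartheta^\ast$ reach an end node landing on the correct time index $i+n$ and the correct residue mod $m$, all within a bounded number of edges independent of $n$. This reachability is precisely what the ``mild assumptions'' (the referenced Appendix proposition) are there to guarantee; without them the limit can fail or hold only along suitable subsequences of $n$. The lower-bound cycle decomposition is routine once positivity of the cycle weights is in hand, so the genuine obstacle is purely this connectivity and residue bookkeeping in the construction.
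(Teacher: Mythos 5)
Your proposal is correct and takes essentially the same route as the paper: the paper's own proof (Theorem \ref{thm:main2} together with Proposition \ref{prop:assump2}) is explicitly just the recurrent-depth argument with all inequalities reversed --- the same cycle-plus-bounded-remainder decomposition of a projected walk for the bound $\liminf_{n} \mathfrak{d}_i(n)/n \ge j$, the same repeated traversal of a minimizing cycle for the other direction, and the same mild assumption (a minimizing cycle whose unfolded path visits a node at every time step) to upgrade the one-sided bounds to a genuine limit. The only remark worth making is that the gluing you flag as the genuine obstacle is in fact immediate under that assumption, because $\mathfrak{d}_i(n)$ is a minimum over \emph{all} start nodes at time $i$ and \emph{all} end nodes at time $i+n$, so one may start and stop on the unfolded cycle path itself and no connecting segments or residue bookkeeping are needed.
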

\vspace{-5pt}
Since it is often the case that $j$ is smaller or equal to 1, it is more intuitive to consider its reciprocal.
\begin{defn}(\textbf{Recurrent Skip Coefficient})\footnote{One would find this definition very similar to the definition of the recurrent depth. Therefore, we refer readers to examples in Figure \ref{fig:basic} for some illustrations.}.
\emph{
Given an RNN and corresponding $\mathcal{G}_{\tmop{un}}$ and $\mathcal{G}_c$, we define $s= \frac{1}{j}$, whose reciprocal is defined in Eq.(\ref{eqn:rsc}), as the recurrent skip coefficient of the RNN.}
\end{defn}
\vspace{-7pt}
With a larger recurrent skip coefficient, the number of transformations per time step is smaller. As a result, the nodes in the RNN are more capable of ``skipping'' across the network, allowing unimpeded information flow across multiple time steps, thus alleviating the problem of
learning long term dependencies. In particular, such effect is more prominent in the long run, due to the network's recurrent structure. 
Also note that not all types of skip connections can increase the recurrent skip coefficient. We will consider specific examples in our experimental results section.




\vspace{-5pt}
\section{Experiments and Results}
\vspace{-5pt}
In this section we conduct a series of experiments to investigate the following
questions: (1) Is recurrent depth a trivial measure? (2) Can increasing depth yield performance improvements?
(3) Can increasing the recurrent skip coefficient improve the performance on long term dependency tasks? (4)
Does the recurrent skip coefficient suggest something more compared to simply adding skip
connections? 
We show our evaluations on both $\tanh$ RNNs and LSTMs.


\vspace{-5pt}
\subsection{Tasks and Training Settings}
\vspace{-5pt}

\textbf{PennTreebank dataset}:
We evaluate our models on character level language modelling
using the PennTreebank dataset \cite{marcus1993building}. It contains 5059k characters
for training, 396k for validation and 446k for test, and has
a alphabet size of 50.
We set each training sequence to have the length of 50.
Quality of fit is evaluated by the bits-per-character (BPC) metric,
which is $\log_2$ of perplexity.

\textbf{text8 dataset}: Another dataset used for character level language modelling
is the text8 dataset\footnote{http://mattmahoney.net/dc/textdata.},
which contains $100M$ characters from Wikipedia 
with an alphabet size of 27. 
We follow the setting from \cite{mikolov2012subword} and 
each training sequence has length of 180.

\textbf{adding problem}:
The adding problem (and the following copying memory problem) was 
introduced in \cite{hochreiter1997long}.
For the adding problem, 
each input has two sequences with length of $T$ where the first sequence are numbers sampled from
uniform[0, 1] and the second sequence are all zeros except two elements which
indicates the position of the two elements in the first sequence that should be
summed together. The output is the sum.
We follow the most recent results and experimental settings in \cite{arjovsky2015unitary} (same for copying memory).

\textbf{copying memory problem}:
Each input sequence has length of
$T + 20$, where the first $10$ values are random integers between $1$ to $8$.
The model should remember them after $T$ steps.
The rest of the sequence are all zeros, except for the last 11 entries in the sequence, which 
starts with $9$
as a marker indicating that the model should 
begin to output its memorized values. The model is expected to give
zero outputs at every time step except the last 10 entries, where it should
generate (copy) the $10$ values in the same order as it has seen at the beginning of the 
sequence.
The goal is to minimize the average cross entropy of category predictions at
each time step.

\textbf{sequential MNIST dataset}:
Each MNIST image data is reshaped into a $784\times1$ sequence,
turning the digit classification task into a sequence classification one
with long-term dependencies~\citep{le2015simple,arjovsky2015unitary}.
A slight modification of the dataset is to permute
the image sequences by a fixed random order beforehand
(permuted MNIST).
Results in \cite{le2015simple} have shown that both \textit{tanh} RNNs and LSTMs  
did not achieve satisfying performance, which also highlights the 
difficulty of this task. 

For all of our experiments  
we use \textit{Adam}~\citep{kingma2014adam} for optimization, and conduct 
a grid search
on the learning rate in $\{10^{-2}, 10^{-3},10^{-4},10^{-5}\}$.
For $\tanh$ RNNs, 
the parameters are initialized with samples from a uniform distribution.
For LSTM networks we adopt a similar initialization scheme, while
the forget gate biases are chosen by the grid search on $\{-5, -3, -1, 0, 1, 3, 5\}$.
We employ early stopping and the batch size was set to $50$.
\vspace{-8pt}
\subsection{\bf Recurrent Depth is Non-trivial}
\vspace{-5pt}
\label{sec:depth_nontrivial}
To investigate the first question, we compare 4 similar connecting architectures: 
1-layer (shallow) ``$sh$'', 2-layers stacked ``$st$'',
2-layers stacked with an extra bottom-up connection ``$bu$'', and
2-layers stacked with an extra top-down connection ``$td$'',
as shown in Figure~\ref{fig:depth_graph}(a), left panel. 
Although the four architectures look quite similar, they have 
different recurrent depths: \textit{sh}, \textit{st} and \textit{bu} have
$d_r = 1$, while \textit{td} has $d_r = 2$. Note that the specific 
construction of the extra nonlinear transformations in 
\textit{td} is not conventional.
Instead of simply adding intermediate layers in hidden-to-hidden
connection, as reported in~\cite{pascanu2013construct}, more nonlinearities are gained by a 
recurrent flow from the first layer to the second layer and then back to the first layer 
at each time step (see the red path in Figure~\ref{fig:depth_graph}a, left panel).

We first evaluate our architectures using $\tanh$ RNN on PennTreebank,
where \textit{sh} has hidden-layer size of $1600$.
Next, we  evaluate four different models for text8 which are $\tanh$ RNN-small,
$\tanh$ RNN-large, LSTM-small, LSTM large, where the model's \textit{sh} architecture
has hidden-layer size of 512, 2048, 512, 1024 respectively. 
Given the architecture of the \textit{sh} model, 
we set the remaining three architectures to have the same number of parameters.
Table \ref{tb:butd}, left panel, shows that the \textit{td} architecture outperforms all the other 
architectures for all the different models.
Specifically, 
\textit{td} in $\tanh$ RNN
achieves a test BPC of 1.49 on PennTreebank, which is comparable to  
the BPC of 1.48 reported in~\cite{krueger2015regularizing} using stabilization techniques.
Similar improvements are shown for LSTMs, where
\textit{td} architecture in LSTM-large achieves BPC of 1.49 on text8, outperforming 
the BPC of 1.54 reported in \cite{mikolov2012subword} with Multiplicative RNN (MRNN). 
It is also interesting to note the 
improvement we obtain when switching from \textit{bu} to \textit{td}.  
The only difference between these two architectures 
lies in changing the direction of one connection (see Figure \ref{fig:depth_graph}(a)), 
which also increases
the recurrent depth. Such a fundamental difference is by no means
self-evident, 
but this result highlights the necessity of the concept of
recurrent depth.

\begin{figure}[!t]
\vspace{-10pt}
\centering
\minipage{0.42\textwidth}
    \includegraphics[width=\linewidth]{./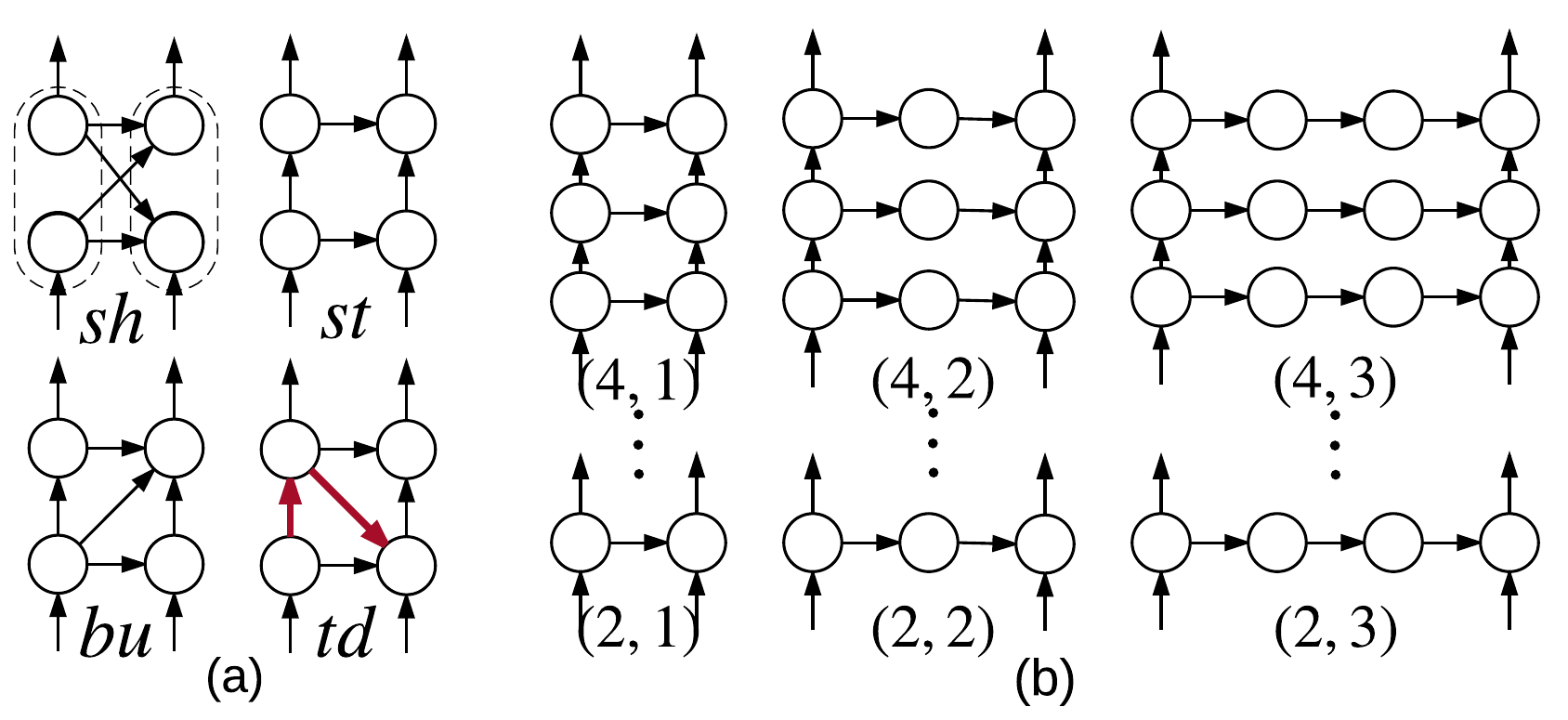}
\endminipage \hfill
\minipage{0.56\textwidth}%
    \includegraphics[width=\linewidth]{./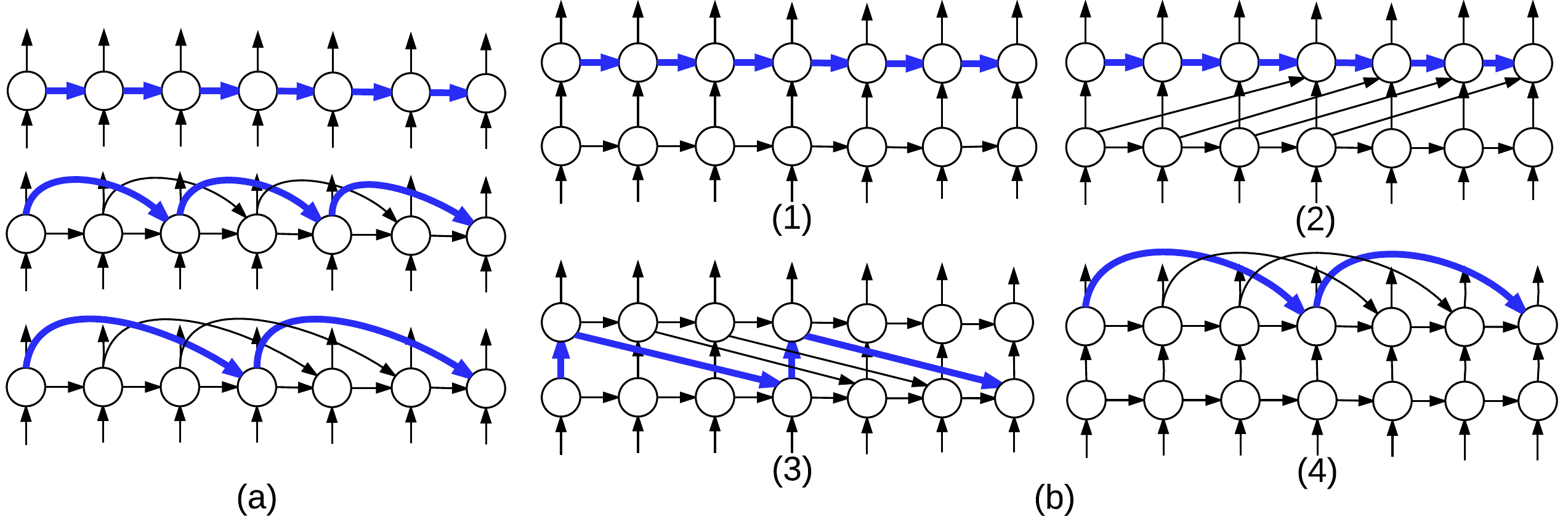}
\endminipage
\vspace{-5pt}
\caption{\small
\textbf{Left}: (a) The architectures for $sh$, $st$, $bu$ and $td$, with their
$(d_r, d_f)$ equal to $(1, 2)$, $(1, 3)$, $(1, 3)$ and $(2, 3)$, respectively.
The longest path in $td$ are colored in red.
(b) The 9 architectures denoted by their $(d_f, d_r)$ with 
 $d_r = 1, 2, 3$ and $d_f = 2, 3, 4$.
In both (a) and (b),
we only plot hidden states at two adjacent time steps and the connections between them (the period number is 1).
\textbf{Right}: (a) Various architectures that we consider 
in Section \ref{sec:diff_skip}.
From top to bottom are baseline $s=1$, and $s = 2$, $s = 3$. 
(b) Proposed architectures that we consider 
in Section \ref{sec:skip_nontrivial} where we take $k = 3$ as an example.
The shortest paths in (a) and (b) that correspond to the
recurrent skip coefficients are colored in blue.
}
\label{fig:depth_graph}
\vspace{-5pt}
\end{figure}

\begin{table}[t]
\vspace{-0pt}
\begin{center}
\begin{small}
\begin{sc}
\begin{tabular}{c|c|c@{\hspace{4pt}}c@{\hspace{4pt}}c@{\hspace{4pt}}c@{\hspace{4pt}}c}
\hline
\hline
Dataset&Models\char92 Archs &\textit{sh} & \textit{st} & \textit{bu} & \textit{td} \\
\hline
PennTreebank&$\tanh$ RNN& 1.54& 1.59& 1.54& \textbf{1.49}\\
\hline
&$\tanh$ RNN-small & 1.80 & 1.82 & 1.80 & \textbf{1.77} \\
text8&$\tanh$ RNN-large & 1.69 & 1.67 & 1.64 & \textbf{1.59} \\
&LSTM-small&1.65 & 1.66 & 1.65 & \textbf{1.63} \\
&LSTM-large&1.52 & 1.53 & 1.52 & \textbf{1.49} \\
\hline
\end{tabular}
\hspace{10pt}
\begin{tabular}{c|@{\hspace{4pt}}c@{\hspace{4pt}}c@{\hspace{4pt}}c}
\hline
\hline
$d_f$\char92$d_r$&$d_r$ = 1 & $d_r$ = 2 & $d_r$ = 3\\
\hline
 $d_f$ = 2 & 1.88 & 1.86 & 1.86\\
 $d_f$ = 3 & 1.86 & \textbf{1.84} & 1.86\\
 $d_f$ = 4 & 1.85 & 1.86 & 1.88\\
\hline
\end{tabular}
\end{sc}
\end{small}
\end{center}
\caption{\small \textbf{Left}: Test BPCs of \textit{sh}, \textit{st}, \textit{bu}, \textit{td} for $\tanh$ RNNs
and LSTMs. \textbf{Right}: Test BPCs of $\tanh$ RNNs with recurrent depth $d_r = 1, 2, 3$ and feedforward depth
$d_f = 2, 3, 4$ respectively.}
\label{tb:butd}
\vspace{-25pt}
\end{table}



\ \ \\
\vspace{-5pt}
\subsection{Comparing Depths}
\vspace{-5pt}
\label{sec:full}
From the previous experiment,
we found some evidence that with larger recurrent depth, the performance might improve. To further investigate various implications of depths, 
we carry out a systematic analysis for
both recurrent depth $d_r$ and feedforward depth $d_f$ on text8 and sequential MNIST datasets.
We build $9$ models in total with
$d_r = 1, 2, 3$ and $d_f = 2, 3, 4$, respectively (as shown in Figure \ref{fig:depth_graph}(b)).
We ensure that all the models have roughly the same number of parameters 
(e.g., the model with $d_r=1$ and $d_f=2$ has a hidden-layer size of $360$).


Table \ref{tb:butd}, right panel, displays results 
on the text8 dataset. We observed that when fixing feedforward
depth $d_f = 2, 3$ (or fixing recurrent depth $d_r = 1, 2$),
increasing recurrent depth $d_r$ from 
$1$ to $2$ (or increasing feedforward depth $d_f$ from $2$ to $3$) does improve the model performance.
The best test BPC is achieved by the architecture
with $d_f =3, d_r = 2$. 
This suggests that reasonably increasing $d_r$ and $d_f$
can aid in better capturing the over-time
nonlinearity of the input sequence. 
However, for too large $d_r$ (or $d_f$) like $d_r=3$ or $d_f = 4$, increasing $d_f$ (or $d_r$) only
hurts models performance. 
This can potentially be attributed to the optimization issues when modelling large input-to-output 
dependencies (see Appendix \ref{sec:full_com} for more details).
With sequential MNIST dataset, we next examined the effects of $d_f$ and $d_r$ 
when modelling long term dependencies (more in Appendix \ref{sec:full_com}).
In particular, we observed that 
increasing $d_f$ does not bring any improvement to the model performance,
and increasing $d_r$ might even be detrimental for training. 
Indeed, it appears that $d_f$ only captures the local nonlinearity
and has less effect on the long term prediction.
This result seems to contradict previous claims~\citep{hermans2013training} that stacked RNNs ($d_f > 1$, $d_r=1$)
could capture information in different time scales and would thus
be more capable of dealing with
learning long-term dependencies. On the other hand, 
a large $d_r$ indicates multiple transformations per time step, resulting in greater
gradient vanishing/exploding issues~\cite{pascanu2013construct}, which suggests
that $d_r$ should be neither too small nor too large.

\vspace{-5pt}
\subsection{Recurrent Skip Coefficients}
\vspace{-5pt}
\label{sec:diff_skip}
To investigate whether increasing a recurrent skip coefficient $s$ improves
model performance on long term dependency tasks, we compare models with
increasing $s$ on the adding problem, the copying memory problem and the sequential MNIST
problem (without/with permutation, denoted as MNIST and $p$MNIST). 
Our baseline model is the shallow architecture proposed in \cite{le2015simple}. To increase
the recurrent skip coefficient~$s$, we add
connections from time step $t$ to 
time step $t+k$ for some fixed integer $k$, shown in Figure \ref{fig:depth_graph}(a), right panel.
By using this specific construction, the recurrent skip
coefficient increases from 1 (i.e., baseline) to $k$ and the new model 
with extra connection has
$2$ hidden matrices (one from $t$ to $t+1$ and
the other from $t$ to $t+k$). 

For the adding problem, we follow the same setting as in \cite{arjovsky2015unitary}.
We evaluate the baseline LSTM with 128 hidden units and an LSTM with $s = 30$
and 90 hidden units (roughly the same number of parameters as the baseline). 
The results are quite encouraging: as suggested in \cite{arjovsky2015unitary} baseline
LSTM works well for input sequence lengths $T = 100, 200, 400$ but fails when
$T= 750$. On the other hand, we observe that the LSTM with  $s = 30$ learns
perfectly when $T = 750$, and even if we increase
$T$ to 1000, LSTM with $s = 30$ still works well and the loss reaches to zero.  

For the copying memory problem, we use a single layer RNN
with 724 hidden units as our basic model, and 512 hidden units with skip connections.
So they have roughly the same number of parameters.
Models with a higher recurrent skip coefficient outperform those without skip 
connections by a large margin. 
When $T=200$, test set cross entropy (CE) of a basic model only yields 0.2409, 
but with $s=40$ it is able to reach a test set cross entropy of 0.0975. 
When $T=300$, a model with $s=30$ yields a test set CE of 0.1328,
while its baseline could only reach 0.2025.
We varied the sequence length ($T$) and recurrent skip coefficient
($s$) in a wide range (where  $T$ varies from 100 up to 300, and $s$ from 10
up to 50), and found that this kind of improvement persists.

For the sequential MNIST problem, the hidden-layer size of the baseline model
is set to $90$ and models with $s>1$ have hidden-layer sizes of~$64$.
\begin{table}[!t]
\vspace{-10pt}
\label{tb:diff_skip}
\centering
\small
\begin{tabular}{c@{\hspace{2pt}}|@{\hspace{4pt}}c@{\hspace{6pt}}c@{\hspace{6pt}}c@{\hspace{6pt}}c@{\hspace{6pt}}c}
\hline
\hline
s$\tanh$ & s = 1 & s = 5 & s = 9 & s = 13 & s = 21\\
  MNIST&34.9 &46.9 &74.9 &85.4  & \textbf{87.8}\\
\hline
   &s = 1 & s = 3 & s = 5 & s = 7 &s = 9\\
 $p$MNIST &49.8 &79.1 &84.3 &\textbf{88.9}  &88.0\\
\hline
\end{tabular}
\quad
\begin{tabular}{c@{\hspace{2pt}}|@{\hspace{4pt}}c@{\hspace{6pt}}c@{\hspace{6pt}}c@{\hspace{6pt}}c@{\hspace{6pt}}c}
\hline
\hline
LSTM & s = 1 & s = 3 & s = 5 & s = 7 & s = 9\\
  MNIST&56.2 &\textbf{87.2} &86.4 &86.4  & 84.8\\
\hline
  & s = 1 & s = 3 & s = 4 & s = 5 &s = 6\\
 $p$MNIST  &28.5 &25.0 &60.8 &62.2  &\textbf{65.9}\\
\hline
\end{tabular}
\\
\vspace{5pt}
\begin{tabular}{c|ccc}
\hline
\hline
Model & MNIST  & $p$MNIST\\
\hline
iRNN\cite{le2015simple} &  97.0& $\approx$82.0\\
uRNN\cite{arjovsky2015unitary}& 95.1&  91.4 \\
LSTM\cite{arjovsky2015unitary} &\textbf{98.2}& 88.0\\
RNN($\tanh$)\cite{le2015simple}& $\approx$35.0& $\approx$35.0\\
\hline
s$\tanh$(s = 21, 11) &98.1& \textbf{94.0}\\
\hline
\end{tabular}
\hspace{7pt}
\begin{tabular}{r@{\hspace{5pt}}|@{\hspace{5pt}}c@{\hspace{8pt}}c@{\hspace{8pt}}c@{\hspace{8pt}}c@{\hspace{8pt}}c}
\hline
\hline
Architecture, s\hspace{4pt} &(1), 1 & (2), 1& (3), $\frac{k}{2}$& (4), $k$  \\
\hline     
MNIST\hspace{2pt} k = 17 & 39.5& 39.4  & 54.2 & \textbf{77.8} \\
 k = 21 & 39.5& 39.9  &69.6 &\textbf{71.8} \\
\hline
$p$MNIST\hspace{5pt} k = 5 & 55.5& 66.6  & 74.7 & \textbf{81.2}\\
\hspace{12pt}k = 9 & 55.5& 71.1  &78.6 &\textbf{86.9}  \\
\hline
\end{tabular}

\hspace{10pt}
\vspace{0pt}
\caption{\small Results for MNIST/$p$MNIST.
\textbf{Top-left}: Test accuracies with different $s$ for $\tanh$ RNN.
\textbf{Top-right}: Test accuracies with different $s$ for LSTM.
\textbf{Bottom-left}: Compared to previous results.
\textbf{Bottom-right}: Test accuracies for architectures (1), (2), (3)
and (4) for  $\tanh$ RNN.}
\label{tb:mnist}
\vspace{-25pt}
\end{table}
The results in Table~\ref{tb:mnist}, top-left panel, show that $\tanh$ RNNs with recurrent 
skip coefficient $s$ larger than $1$ could improve the model performance dramatically.
Within a reasonable range of $s$, test accuracy increases quickly as $s$ becomes
larger. We note that our model is the first $\tanh$ RNN model
that achieves good performance on this task,
even improving upon the method proposed in~\cite{le2015simple}.
In addition, we also formally compare with the previous results reported in 
\cite{le2015simple, arjovsky2015unitary}, where our model (referred to as s$\tanh$)
has a hidden-layer size of $95$, which is about the same number of parameters as in the $\tanh$ model of
\cite{arjovsky2015unitary}.
Table \ref{tb:mnist}, bottom-left panel, shows 
that our simple architecture
improves upon the $u$RNN by $2.6\%$ on $p$MNIST,
and achieves almost the same performance as LSTM on the MNIST dataset with only $25\%$ number of parameters  
~\cite{arjovsky2015unitary}.
Note that obtaining good performance on sequential MNIST requires a larger $s$ than that for
$p$MNIST (see Appendix~\ref{sec:full_com} for more details).
LSTMs also showed performance boost and much faster
convergence speed when using larger $s$, as displayed in Table~\ref{tb:mnist}, top-right panel.
LSTM with $s=3$ already
performs quite well and increasing $s$ did not result in any significant improvement,
while in $p$MNIST, the performance gradually improves  as $s$ increases
from $4$ to $6$. We also observed that the LSTM network performed worse on permuted MNIST
compared to a $\tanh$ RNN. Similar result was also reported in~\cite{le2015simple}.

\vspace{-5pt}
\subsection{Recurrent Skip Coefficients vs. Skip Connections}
\vspace{-5pt}
\label{sec:skip_nontrivial}
We also investigated whether the recurrent skip coefficient can suggest something more than simply adding skip connections. We design 4 specific 
architectures shown in Figure \ref{fig:depth_graph}(b), right panel.
(1) is the baseline model with a 2-layer stacked architecture,
while the other three models add extra skip connections in different ways.
Note that \textbf{these extra skip connections
all cross the same time length $k$}.
In particular, (2) and (3) share quite similar architectures.
However, ways in which the skip connections are allocated 
makes big differences on their recurrent skip coefficients: (2) has $s = 1$,
(3) has $s = \frac{k}{2}$ and (4) has $s = k$. 
Therefore, even though (2), (3) and (4) all add extra skip connections, the fact that their recurrent skip coefficients are different might result in different performance.

We evaluated these architectures on the sequential MNIST and
$p$MNIST datasets. The results show that differences in~$s$
indeed cause big performance gaps regardless of the fact that they
all have skip connections (see Table~\ref{tb:mnist}, bottom-right panel).
Given the same $k$, the model with a larger $s$ performs better. 
In particular, model~(3) is better than model~(2) even though they only differ in the direction of the skip connections.
It is interesting to see that for MNIST (unpermuted), the extra skip connection
in model (2) (which does not really increase the recurrent skip coefficient)
brings almost no benefits, as model (2) and model (1) have 
almost the same results.
This observation
highlights the following point: when addressing
the long term dependency problems using skip connections,
instead of only considering the time intervals crossed by
the skip connection, one should also consider the model's
recurrent skip coefficient, which can serve as a guide for introducing more powerful skip connections.

\vspace{-8pt}
\section{Conclusion}
\vspace{-7pt}
In this paper, we first introduced a general formulation of RNN architectures, which provides a solid framework for the architectural complexity analysis. We then proposed three architectural complexity measures: recurrent depth, feedforward depth, and recurrent skip coefficients capturing both short term and long term properties of RNNs.
We also found empirical evidences that 
increasing recurrent depth and feedforward depth might yield performance improvements, increasing feedforward depth might not help on long
term dependency tasks, while increasing the recurrent skip coefficient
can largely improve performance on long term dependency tasks. 
These measures and results can provide guidance for the design of new 
recurrent architectures for particular learning tasks.
\vspace{-5pt}
\section*{Acknowledgments}
\vspace{-10pt}
The authors acknowledge the following agencies for funding and support: NSERC, Canada Research Chairs, CIFAR, Calcul Quebec, Compute Canada, Samsung,
ONR Grant N000141310721, ONR Grant N000141512791 and 
IARPA Raytheon BBN Contract No. D11PC20071. 
The authors thank the developers of Theano \cite{team2016theano} and Keras \cite{chollet2015}, and also thank Nicolas Ballas,
Tim Cooijmans, Ryan Lowe, Mohammad Pezeshki, Roger Grosse and Alex Schwing for their insightful comments.
\newpage
{\small 
\bibliography{1recdepth}
\bibliographystyle{unsrt}
}

\appendix
\onecolumn
\title{Supplementary  Materials: Architectural Complexity Measures of Recurrent Neural Networks}

\section{Proofs}
\label{sec:proofs}

To show theorem \ref{thm:main-main}, we first consider the most general case in which $d_r$ is defined (Theorem \ref{thm:main}). Then we discuss the mild assumptions under which we can reduce to the original limit (Proposition \ref{prop:assump}). Additionally, we introduce some notations that will be used throughout the proof. If
$v = (t, p) \in \mathcal{G}_{\tmop{un}}$ is a node in the unfolded graph, it has a corresponding node in the folded graph, which is denoted by $\bar{v} =
(\bar{t}, p)$.

\begin{thm}
\label{thm:main}
  Given an RNN cyclic graph and its unfolded representation $(\mathcal{G}_c,
  \mathcal{G}_{\tmop{un}})$, we denote $C (\mathcal{G}_c)$ the set of directed
  cycles in $\mathcal{G}_c$. For $\vartheta \in C (\mathcal{G}_c)$, denote $l
  (\vartheta)$ the length of $\vartheta$ and $\sigma_s (\vartheta)$ the sum of
  $\sigma$ along $\vartheta$. Write $d_i = \limsup_{k \rightarrow \infty}
  \frac{\mathfrak{D}_i (n)}{n}$.\footnote{$\mathfrak{D}_i(n)$ is not defined when there does not exist a path from time $i$ to time $i+n$. We simply omit undefined cases when we consider the limsup. In a more rigorous sense, it is the limsup of a subsequence of $\{\mathfrak{D}_i(n)\}_{n=1}^{\infty}$, where $\mathfrak{D}_i(n)$ is defined.} we have :
  \begin{itemize}
    \item  The quantity $d_i$ is periodic, in the sense that $d_{i + m} = d_i,
    \forall i \in \mathbb{N}$.
    
    \item Let $d_r = \max_i d_i$, then
    \begin{equation}
    \label{eqn:recdepth2}
        d_r = \max_{\vartheta \in C (\mathcal{G}_c)} \frac{l
       (\vartheta)}{\sigma_s (\vartheta)}
    \end{equation}
  \end{itemize}
\end{thm}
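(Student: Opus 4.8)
The plan is to translate the longest-path quantity $\mathfrak{D}_i(n)$, which lives in the unfolded DAG $\mathcal{G}_{\tmop{un}}$, into a constrained longest-walk problem on the finite folded graph $\mathcal{G}_c$, and then to analyze that problem by decomposing walks into cycles. First I would set up the folding/unfolding dictionary: any directed path in $\mathcal{G}_{\tmop{un}}$ projects, via $v=(t,p)\mapsto\bar v=(\bar t,p)$, to a directed walk in $\mathcal{G}_c$ of the same length whose edge-weight sum $\sigma_s$ equals the total elapsed time; conversely, fixing a start time and lifting a walk of $\mathcal{G}_c$ yields a genuine path in $\mathcal{G}_{\tmop{un}}$, since the unfolding is a DAG (Lemma \ref{prop:dag}) and hence no node can repeat. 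The periodicity claim $d_{i+m}=d_i$ then follows immediately from the shift-invariance of $\mathcal{G}_{\tmop{un}}$ under time translation by $m$: this gives $\mathfrak{D}_{i}(n)=\mathfrak{D}_{i+m}(n)$ for every $n$, and the $\limsup$ inherits the equality.

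Under this dictionary, $\mathfrak{D}_i(n)$ equals the maximal length of a walk in $\mathcal{G}_c$ that starts at a node of time index $i$ (mod $m$) and has total weight exactly $n$. Two structural facts about $\mathcal{G}_c$ are the engine of the proof. By condition (1), summing $\sigma\equiv i_v-i_u \pmod m$ around any cycle telescopes to $0$, so $\sigma_s(\vartheta)$ is a multiple of $m$ for every cycle $\vartheta$; and by condition (3) together with the unidirectional orientation convention, every directed cycle satisfies $\sigma_s(\vartheta)>0$ (otherwise a closed walk combining a nonpositive and a positive cycle, with suitable integer multiplicities, would have zero total weight). I then record $d^\ast := \max_{\vartheta\in C(\mathcal{G}_c)} l(\vartheta)/\sigma_s(\vartheta)$, which is well defined because $C(\mathcal{G}_c)$ is finite and all denominators are positive.

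For the upper bound I would decompose any walk $W$ realizing $\mathfrak{D}_i(n)$ into a simple path $P$ (from its start to its end) plus a collection of simple cycles $\vartheta_1,\dots,\vartheta_k$, obtained by repeatedly excising a cycle each time the walk revisits a vertex. Lengths and weights add: $l(W)=l(P)+\sum_j l(\vartheta_j)$ and $n=\sigma_s(W)=\sigma_s(P)+\sum_j\sigma_s(\vartheta_j)$. Since $l(\vartheta_j)\le d^\ast\,\sigma_s(\vartheta_j)$, I obtain $l(W)\le d^\ast n + \big(l(P)-d^\ast\sigma_s(P)\big)$, and because $P$ is a simple path in the finite graph $\mathcal{G}_c$ the bracketed term is bounded by a constant independent of $n$. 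Hence $\mathfrak{D}_i(n)\le d^\ast n + O(1)$, so $d_i\le d^\ast$ for every $i$ and thus $d_r\le d^\ast$.

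For the matching lower bound I would take a maximizing cycle $\vartheta^\ast$, lift it starting from one of its vertices at some time index $i^\ast$, and traverse it $k$ times; because $\sigma_s(\vartheta^\ast)$ is a positive multiple of $m$, the endpoints stay on the lattice of valid unfolded nodes and (by the DAG property) no vertex repeats, yielding a path of length $k\,l(\vartheta^\ast)$ over $n=k\,\sigma_s(\vartheta^\ast)$ time steps. Along this subsequence $\mathfrak{D}_{i^\ast}(n)/n \ge l(\vartheta^\ast)/\sigma_s(\vartheta^\ast)=d^\ast$, so $d_{i^\ast}\ge d^\ast$ and therefore $d_r=\max_i d_i = d^\ast$, as claimed. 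I expect the main obstacle to be the upper bound: making the walk-to-cycle decomposition rigorous while controlling the leftover simple path by a uniform constant, and in particular pinning down the positivity $\sigma_s(\vartheta)>0$ from condition (3), which is exactly what guarantees both that $d^\ast$ is finite and that the $O(1)$ remainder cannot corrupt the limit.
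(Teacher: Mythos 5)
Your proposal is correct and follows essentially the same route as the paper's proof: periodicity via time-shift invariance of $\mathcal{G}_{\tmop{un}}$, the lower bound by lifting a maximizing cycle $\vartheta^\ast$ and traversing it $k$ times, and the upper bound by projecting a longest path to a walk on $\mathcal{G}_c$ and decomposing it into a path plus directed cycles whose ratios are each bounded by $\max_{\vartheta} l(\vartheta)/\sigma_s(\vartheta)$, with the leftover path absorbed into an $O(1)$ term. Your additive formulation $\mathfrak{D}_i(n)\le d^\ast n+O(1)$ and your explicit justification that $\sigma_s(\vartheta)>0$ are slightly cleaner renderings of steps the paper carries out (or leaves implicit) in ratio form, but the underlying argument is the same.
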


\begin{proof}
  The first statement is easy to prove. Because of the periodicity of the graph,
  any path from time step \ $i$ to $i + n_{}$ corresponds to an isomorphic
  path from time step $i + m$ to $i + m + n$. Passing to limit, and we can
  deduce the first statement.
  
   Now we prove the second statement. Write $\vartheta_0 =
  \tmop{argmax}_{\vartheta} \frac{l (\vartheta)}{\sigma_s (\vartheta)} $.
  First we prove that $d \geq \frac{l (\vartheta_0)}{\sigma_s (\vartheta_0)}$.
  Let $c_1 = (t_1, p_1) \in \mathcal{G}_{\tmop{un}}$ be a node such that if we
  denote $\overline{c_1} = (\overline{t_1}, p_1)$ the image of $c_1$ on the
  cyclic graph, we have $\overline{c_1} \in \vartheta_0$. Consider the
  subsequence $S_0 = \left\{ \frac{\mathfrak{D}_{\overline{t_1}} (k \sigma_s
  (\vartheta_0))}{k \sigma_s (\vartheta_0)} \right\}_{k = 1}^{\infty}$ of
  $\left\{ \frac{\mathfrak{D}_{\overline{t_1}} (n)}{n} \right\}_{n =
  1}^{\infty}$. From the definition of $\mathfrak{D}$ and the fact that
  $\vartheta_0$ is a directed circle, we have $\mathfrak{D}_{\overline{t_1}}
  (k \sigma_s (\vartheta_0)) \geq k l (\vartheta_0)$, by considering the path
  on $\mathcal{G}_{\tmop{un}}$ corresponding to following $\vartheta_0$ $k$
  -times. So we have
  \[ d_r \geq \limsup_{k \rightarrow + \infty} \frac{\mathfrak{D}_i (n)}{n}
     \geq \limsup_{k \rightarrow + \infty} \frac{\mathfrak{D}_{\overline{t_1}} (k
     \sigma_s (\vartheta_0))}{k \sigma_s (\vartheta_0)} \geq \frac{k l
     (\vartheta_0)}{k \sigma_s (\vartheta_0)} = \frac{l
     (\vartheta_0)}{\sigma_s (\vartheta_0)} \]
  Next we prove $d_r \leq \frac{l (\vartheta_0)}{\sigma_s (\vartheta_0)}$. It
  suffices to prove that, for any $\epsilon \geq 0$, there exists $N > 0$,
  such that for any path $\gamma : \{ \nobracket (t_0, p_0) \nocomma, (t_1,
  p_1), \cdots, (t_{n_{\gamma}}, p_{n_{\gamma}}) \} \nobracket$ with $t_{n_\gamma} -
  t_1 > N$, we have $\frac{n_{\gamma}}{t_{n_\gamma} - t_1} \leq \frac{l
  (\vartheta_0)}{\sigma_s (\vartheta_0)} + \epsilon$. We denote $\bar{\gamma}$
  as the image of $\gamma$ on the cyclic graph. $\bar{\gamma}$ is a walk with
  repeated nodes and edges. Also, we assume there are in total $\Gamma$ nodes
  in cyclic graph $\mathcal{G}_c$.
  
  We first decompose $\bar{\gamma}$ into a path and a set of directed
  cycles. More precisely, there is a path $\gamma_0$ and a sequence of
  directed cycles $C = C_1 (\gamma), C_2 (\gamma), \cdots, C_w (\gamma)$ on
  $\mathcal{G}_c$ such that:
  \begin{itemize}
    \item The starting and end nodes of $\gamma_0$ is the same as $\gamma$.
    (If $\gamma$ starts and ends at the same node, take $\gamma_0$ as empty.)
    
    \item The catenation of the sequences of directed edges $E (\gamma_0), E
    (C_1 (\gamma)), E (C_2 (\gamma)), \cdots, E (C_w (\gamma))$ is a
    permutation of the sequence of edges of $E (\gamma)$.
  \end{itemize}
  The existence of such a decomposition can be proved iteratively by removing
  directed cycles from $\gamma$. Namely, if $\gamma$ is not a paths, there
  must be some directed cycles $C'_{}$ on $\gamma$. Removing $C'$ from
  $\gamma$, we can get a new walk $\gamma'$. Inductively apply this removal,
  we will finally get a (possibly empty) path and a sequence of directed
  cycles. For a directed path or loop $\gamma$, we write $D (\gamma)$ the
  distance between the ending node and starting node when travel through
  $\gamma$ once. We have
  \[ D (\gamma_0) : = \overline{t_{n_{\gamma}}} - \overline{t_0} + \sum_{i =
     1}^{| \gamma_0 |} \sigma (e_i) \]
  where $e_i, i \in \{ 1, 2, \cdots, | \gamma_0 | \}$ is all the edges of
  $\gamma_0$. $\bar{t}$ denotes the module of $t$: $t \equiv \bar{t}
  (\tmop{mod} m)$.
  
  So we have:
  \[ | D (\gamma_0) | \leq m + \Gamma \cdot \max_{e \in \mathcal{G}_c} \sigma
     (e) = M \]
  \ For convenience, we denote $l_0, l_1, \cdots, l_w$ to be the length of
  path $\gamma_0$ and directed cycles $C_1 (\gamma), C_2 (\gamma), \cdots, C_w
  (\gamma)$. Obviously we have:
  \[ n_{\gamma} = \sum_{i = 0}^w l_i \]
  And also, we have
  \[ t_{n_\gamma} - t_1 = \sum_{i = 1}^w \sigma_s (C_i) + D (\gamma_0) \]
  So we have:
  \[ \frac{n_{\gamma}}{t_{n_\gamma} - t_1} = \frac{l_0}{t_{n_\gamma} - t_1} + \sum_{i = 1}^w
     \frac{l_i}{t_{n_\gamma} - t_1} \leq \frac{\Gamma}{N} + \sum_{i = 1}^w
     \frac{l_i}{t_{n_\gamma} - t_1} \]
  In which we have for all $i \in \{ 1, 2, \cdots, w \}$ :
  \[ \frac{l_i}{t_{n_\gamma} - t_1} = \frac{l_i}{\sigma_s (C_i)} \cdot \frac{\sigma_s
     (C_i)}{t_{n_\gamma} - t_1} \leq \frac{l (\vartheta_0)}{\sigma_s (\vartheta_0)}
     \frac{\sigma_s (C_i)}{t_{n_\gamma} - t_1} \]
  So we have:
  \[ \sum_{i = 1}^w \frac{l_i}{t_{n_\gamma} - t_1} \leq \frac{l
     (\vartheta_0)}{\sigma_s (\vartheta_0)} \left[ 1 - \frac{D (\gamma_0)}{t_{n_\gamma}
     - t_1} \right] \leq \frac{l (\vartheta_0)}{\sigma_s (\vartheta_0)} +
     \frac{M'}{N} \]
  in which $M'$ and $\Gamma$ are constants depending only on the RNN
  $\mathcal{G}_c$.
  
  Finally we have:
  \[ \frac{n_{\gamma}}{t_{n_\gamma} - t_1} \leq \frac{l (\vartheta_0)}{\sigma_s
     (\vartheta_0)} + \frac{M' + \Gamma}{N} \]
  take $N > \frac{M' + \Gamma}{\epsilon}$, we can prove the fact that $d_r
  \leq \frac{l (\vartheta_0)}{\sigma_s (\vartheta_0)}$.
  
  \ 
\end{proof}

\begin{proposition}
\label{prop:assump}
  Given an RNN and its two graph representations $\mathcal{G}_{\tmop{un}}$ and
  $\mathcal{G}_c$, if $\exists \vartheta \in C (\mathcal{G}_c)$ such that
  $(1)$ $\vartheta$ achieves the maximum in Eq.(\ref{eqn:recdepth2}) and $(2)$
  the corresponding path of $\vartheta$ in $\mathcal{G}_{\tmop{un}}$ visits nodes at every time step,
  then we have
  \[ d_r = \max_{i \in \mathbb{Z}} \left( \limsup_{n \rightarrow + \infty}
     \frac{\mathfrak{D}_i (n)}{n} \right) = \lim_{n \rightarrow + \infty}
     \frac{\mathfrak{D}_i (n)}{n} \]
\end{proposition}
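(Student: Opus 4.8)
The plan is to establish the second equality by a squeeze argument, since the first equality $d_r = \max_i\left(\limsup_n \frac{\mathfrak{D}_i(n)}{n}\right)$ is nothing but the definition $d_r = \max_i d_i$ already delivered by Theorem~\ref{thm:main}. The upper bound $\limsup_{n\to\infty}\frac{\mathfrak{D}_i(n)}{n} \le d_r$ also comes for free: the second half of the proof of Theorem~\ref{thm:main} bounds $\frac{n_\gamma}{t_{n_\gamma}-t_1}$ above by $\frac{l(\vartheta_0)}{\sigma_s(\vartheta_0)}+\epsilon = d_r+\epsilon$ for \emph{every} sufficiently long path $\gamma$, uniformly in its starting time. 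Hence the only genuine content is to upgrade the $\limsup$ to a true $\lim$, for which it suffices to prove the matching lower bound $\liminf_{n\to\infty}\frac{\mathfrak{D}_i(n)}{n} \ge d_r$ for every $i$.

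For this lower bound I would work directly with the maximizing cycle $\vartheta_0$ guaranteed by assumption~(1), so that $\frac{l(\vartheta_0)}{\sigma_s(\vartheta_0)} = d_r$, together with the infinite path $P$ in $\mathcal{G}_{\tmop{un}}$ obtained by following $\vartheta_0$ repeatedly. The recurrent-depth proof already supplies a clean bound along the arithmetic subsequence $n = k\,\sigma_s(\vartheta_0)$: starting from a node of $P$ at time $i$ and traversing $\vartheta_0$ exactly $k$ times lands at time $i + k\,\sigma_s(\vartheta_0)$ after precisely $k\,l(\vartheta_0)$ edges, giving $\mathfrak{D}_i(k\,\sigma_s(\vartheta_0)) \ge k\,l(\vartheta_0)$. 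The remaining task is to cover the \emph{off-period} values of $n$.

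This is exactly where assumption~(2) enters. Because $P$ visits a node at every time step, for an arbitrary $n$ I would write $n = k\,\sigma_s(\vartheta_0) + r$ with $0 \le r < \sigma_s(\vartheta_0)$, follow $P$ through $k$ complete copies of $\vartheta_0$ (accumulating $k\,l(\vartheta_0)$ edges up to time $i + k\,\sigma_s(\vartheta_0)$), and then continue along $P$ into the next copy until reaching a node at time $i+n$, which exists precisely because no time step is skipped. The extra segment contributes a non-negative number of edges, so
\[
\mathfrak{D}_i(n) \;\ge\; k\,l(\vartheta_0) \;=\; \left\lfloor \tfrac{n}{\sigma_s(\vartheta_0)} \right\rfloor l(\vartheta_0) \;\ge\; n\,d_r - l(\vartheta_0),
\]
a constant deficit independent of $n$. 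Dividing by $n$ and letting $n\to\infty$ yields $\liminf_n \frac{\mathfrak{D}_i(n)}{n} \ge d_r$, and since the argument never used the particular value of $i$, this holds for all $i$. Combined with the upper bound this gives $\lim_{n\to\infty}\frac{\mathfrak{D}_i(n)}{n} = d_r$ for every $i$, which is the desired second equality.

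I expect the main obstacle to be the bookkeeping in the off-period step: one must verify that continuing along $P$ into a partial copy of $\vartheta_0$ genuinely attains a node at the prescribed time $i+n$, and that its edge count is non-negative even when individual edges of $\vartheta_0$ carry non-positive weights $\sigma$ — it is only the \emph{cumulative} positivity $\sigma_s(\vartheta_0)>0$ over a full cycle, together with assumption~(2), that guarantees every intermediate time is realized. Without assumption~(2) the liminf could strictly dip below $d_r$ on the time steps the maximizing cycle skips, so this hypothesis is precisely what separates $\lim$ from $\limsup$.
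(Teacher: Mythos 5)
Your proposal is correct and follows essentially the same route as the paper's proof: both establish the lower bound $\liminf_{n\to\infty}\frac{\mathfrak{D}_i(n)}{n}\geq d_r$ by following the unfolded path of the maximizing cycle $\vartheta_0$ for $\lfloor n/\sigma_s(\vartheta_0)\rfloor$ full turns and invoking condition (2) to extend the path to a node at time exactly $i+n$, incurring only a bounded deficit, while the upper bound is inherited from Theorem \ref{thm:main}. The bookkeeping subtlety you flag (reaching time $i+n$ along the path when edge weights need not be positive) is present but left implicit in the paper's one-line appeal to condition (2) as well, so your treatment is, if anything, slightly more explicit.
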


\begin{proof}
  We only need to prove, in such a graph, for all $i \in \mathbb{Z}$ we have
  \[ \liminf_{n \rightarrow + \infty} \frac{\mathfrak{D}_i (n)}{n} \geq
     \max_{i \in \mathbb{Z}} \left( \limsup_{n \rightarrow + \infty}
     \frac{\mathfrak{D}_i (n)}{n} \right) = d_r \]
  Because it is obvious that
  \[ \tmop{liminf}_{n \rightarrow + \infty} \frac{\mathfrak{D}_i (n)}{n} \leq
     d_r \]
  Namely, it suffice to prove, for all $i \in \mathbb{Z}$, for all $\epsilon
  > 0$, there is an $N_{\epsilon} > 0$, such that when $n > N_{\epsilon}$, we
  have $\frac{\mathfrak{D}_i (n)}{n} \geq d_r - \epsilon$. On the other hand,
  for $k \in \mathbb{N}$, if we assume $(k + 1) \sigma_s (\vartheta) + i > n
  \geq i + k \cdot \sigma_s (\vartheta)$, then according to condition $(2)$ we
  have
  \[ \frac{\mathfrak{D}_i (n)}{n} \geq \frac{k \cdot l (\vartheta)}{(k + 1)
     \sigma_s (\vartheta)} = \frac{l (\vartheta)}{\sigma_s (\vartheta)} -
     \frac{l (\vartheta)}{\sigma_s (\vartheta)} \frac{1}{k + 1} \]
  We can see that if we set $k > \frac{\sigma_s (\vartheta)}{l (\vartheta)
  \epsilon}$, the inequality we wanted to prove.
  
  \ 
\end{proof}

We now prove Proposition \ref{prop:sup} and Theorem \ref{thm:feed} as follows.

\begin{proposition}
  Given an RNN with recurrent depth $d_r$, we denote
  \[ d_f = \sup_{i, n \in \mathbb{Z}} \mathfrak{D}_i^{\ast} (n) - n \cdot
     d_r. \]
  The supremum $d_f$ exists and we have the following least upper bound:
  \[ \mathfrak{D}_i^{\ast} (n) \leq n \cdot d_r + d_f. \]
\end{proposition}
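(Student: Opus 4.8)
We need to show that $d_f = \sup_{i,n\in\mathbb{Z}}\big(\mathfrak{D}^*_i(n) - n\cdot d_r\big)$ exists as a finite supremum.

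The key is showing the quantity $\mathfrak{D}^*_i(n) - n\cdot d_r$ is bounded above.

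We know $\mathfrak{D}^*_i(n) \leq \mathfrak{D}_i(n)$ since input-output paths are a special case of paths, and from the recurrent depth theorem, $\mathfrak{D}_i(n)/n \to d_r$.

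The subtlety: we need uniform boundedness, not just the limit.

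**The plan.** The plan is to establish that $\mathfrak{D}^*_i(n) - n\cdot d_r$ is bounded above by a constant independent of $i$ and $n$. Since $\mathfrak{D}^*_i(n) \leq \mathfrak{D}_i(n)$ (every input-to-output path is in particular a path, so the longest input-output path is no longer than the longest path), it suffices to bound $\mathfrak{D}_i(n) - n\cdot d_r$. First I would reuse the decomposition machinery from the proof of Theorem~\ref{thm:main}: any path $\gamma$ from time $i$ to time $i+n$ decomposes, via its image $\bar\gamma$ in $\mathcal{G}_c$, into a base path $\gamma_0$ together with a collection of directed cycles $C_1,\dots,C_w$, so that the total length satisfies $n_\gamma = l_0 + \sum_{j=1}^w l_j$ and the total time displacement satisfies $n = t_{n_\gamma} - t_i = \sum_{j=1}^w \sigma_s(C_j) + D(\gamma_0)$.

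**The main estimate.** Next I would bound each piece. For the cyclic part, by the definition of $d_r$ in Eq.(\ref{eqn:recdepth2}) we have $l_j \leq d_r\,\sigma_s(C_j)$ for every cycle $C_j$, so $\sum_{j=1}^w l_j \leq d_r \sum_{j=1}^w \sigma_s(C_j) = d_r\big(n - D(\gamma_0)\big)$. For the base path $\gamma_0$, since it is a simple path in the finite graph $\mathcal{G}_c$ (no repeated edges), its length $l_0$ is bounded by a constant $\Gamma$ (the number of nodes), and $|D(\gamma_0)| \leq M := m + \Gamma\cdot\max_{e}\sigma(e)$, exactly the bounds already derived in the proof of Theorem~\ref{thm:main}. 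Combining these,
\[
 \mathfrak{D}_i(n) - n\cdot d_r \;=\; l_0 + \sum_{j=1}^w l_j - n\,d_r \;\leq\; \Gamma + d_r\big(n - D(\gamma_0)\big) - n\,d_r \;=\; \Gamma - d_r\,D(\gamma_0) \;\leq\; \Gamma + d_r\,M,
\]
which is a constant depending only on $\mathcal{G}_c$. Taking the supremum over all $i,n$ and all admissible paths gives a finite upper bound, so $d_f$ exists. The claimed inequality $\mathfrak{D}^*_i(n) \leq n\cdot d_r + d_f$ then follows immediately by the definition of $d_f$ as a supremum.

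**Anticipated obstacle.** The main thing to be careful about is handling the base path displacement $D(\gamma_0)$ with the correct sign, so that the $n\cdot d_r$ terms genuinely cancel and what survives is a genuine constant rather than something that grows with $n$; this is where the uniformity (independence from $i$ and $n$) is won or lost. A secondary technical point is the treatment of cases where no input-to-output path of the relevant time span exists, so that $\mathfrak{D}^*_i(n)$ is undefined — as in the footnote to Theorem~\ref{thm:main}, these cases are simply omitted from the supremum, which only makes the supremum smaller and so does not affect finiteness. Everything else is a direct reuse of the cycle-decomposition already established, so no new heavy machinery is required.
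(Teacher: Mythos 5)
Your proof is correct and follows essentially the same route as the paper's: both reduce the problem via $\mathfrak{D}^*_i(n) \leq \mathfrak{D}_i(n)$ and then invoke the cycle-decomposition estimate from the proof of Theorem~\ref{thm:main} showing $\mathfrak{D}_i(n) \leq n\cdot d_r + (M' + \Gamma)$. The only cosmetic difference is that you re-derive that estimate explicitly (obtaining the uniform constant $\Gamma + d_r M$, which also makes the paper's $m$-periodicity reduction unnecessary), whereas the paper cites it directly.
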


\begin{proof}
  We first prove that $d_f < + \infty$. Write $d_f (i) = \sup_{n \in
  \mathbb{Z}} \mathfrak{D}_i^{\ast} (n) - n \cdot d_r$. It is easy to verify
  $d_f (\cdot)$ is $m -$periodic, so it suffices to prove for each $i \in
  \mathbb{N}$, $d_f (i) < + \infty$. Hence it suffices to prove
  \[ \limsup_{n \rightarrow \infty} (\mathfrak{D}_i^{\ast} (n) - n \cdot d_r)
     < + \infty. \]
  From the definition, we have
  $\mathfrak{D}_i (n) \geq \mathfrak{D}_i^{\ast} (n).$
  So we have
  \[ \mathfrak{D}_i^{\ast} (n) - n \cdot d_r \leq \mathfrak{D}_i (n) - n
     \cdot d_r. \]
  From the proof of Theorem \ref{thm:main}, \ there exists two constants $M'$ and
  $\Gamma$ depending only on the RNN $\mathcal{G}_c$, such that
  \[ \frac{\mathfrak{D}_i (n)}{n} \leq d_r + \frac{M' + \Gamma}{n}. \]
  So we have
  \[ \limsup_{n \rightarrow \infty} (\mathfrak{D}_i^{\ast} (n) - n \cdot d_r)
     \leq \limsup_{n \rightarrow \infty} (\mathfrak{D}_i (n) - n \cdot d_r)
     \leq M' + \Gamma. \]
  Also, we have $d_f = \sup_{i, n \in \mathbb{Z}} \mathfrak{D}_i^{\ast} (n) -
  n \cdot d_r$, so for any $i, n \in \mathbb{Z}$,
  \[ d_f \geq \mathfrak{D}_i^{\ast} (n) - n \cdot d_r. \]
  
\end{proof}

\begin{thm}
  Given an RNN and its two graph representations $\mathcal{G}_{\tmop{un}}$ and
  $\mathcal{G}_c$, we denote $\xi (\mathcal{G}_c)$ the set of directed path
  that starts at an input node and ends at an output node in $\mathcal{G}_c$.
  For $\gamma \in \xi(\mathcal{G}_c)$, denote $l (\gamma)$ the length and
  $\sigma_s (\gamma)$ the sum of $\sigma$ along $\gamma$. Then we have:
  \[ d_f = \sup_{i, n \in \mathbb{Z}} \mathfrak{D}_i^{\ast} (n) - n \cdot d_r
     = \max_{\gamma \in \xi (\mathcal{G}_c)} l (\gamma) - \sigma_s (\gamma)
     \cdot d_r. \]
\end{thm}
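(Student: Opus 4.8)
The plan is to prove the stated identity by establishing the two matching inequalities
$d_f \geq \max_{\gamma \in \xi(\mathcal{G}_c)} \bigl(l(\gamma) - \sigma_s(\gamma)\, d_r\bigr)$
and
$d_f \leq \max_{\gamma \in \xi(\mathcal{G}_c)} \bigl(l(\gamma) - \sigma_s(\gamma)\, d_r\bigr)$.
Since Proposition \ref{prop:sup} already guarantees that the supremum defining $d_f$ is finite, both quantities are genuine real numbers, and the maximum over the finite edge-set structure of $\mathcal{G}_c$ is attained. The whole argument parallels the proof of Theorem \ref{thm:main}, replacing cycles by input-to-output paths and carrying the cycle-decomposition bookkeeping through.

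For the lower bound, I would take $\gamma_0 \in \xi(\mathcal{G}_c)$ attaining $\max_\gamma \bigl(l(\gamma) - \sigma_s(\gamma)\, d_r\bigr)$. Unfolding $\gamma_0$ once produces a directed path in $\mathcal{G}_{\tmop{un}}$ from an input node at some time $i$ to an output node at time $i + \sigma_s(\gamma_0)$ whose length is exactly $l(\gamma_0)$. Hence $\mathfrak{D}^*_i(\sigma_s(\gamma_0)) \geq l(\gamma_0)$, and because $d_f$ is a supremum over all $i,n$ we obtain $d_f \geq \mathfrak{D}^*_i(\sigma_s(\gamma_0)) - \sigma_s(\gamma_0)\, d_r \geq l(\gamma_0) - \sigma_s(\gamma_0)\, d_r$, which is precisely the lower bound.

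The upper bound is the substantive half. Fix arbitrary $i,n$ and let $\gamma$ be a longest input-output path in $\mathcal{G}_{\tmop{un}}$ realizing $\mathfrak{D}^*_i(n)$. Projecting $\gamma$ onto $\mathcal{G}_c$ yields a walk $\bar\gamma$ from an input node to an output node; I would decompose $\bar\gamma$, exactly as in Theorem \ref{thm:main}, into a residual input-output path $\gamma_0$ together with directed cycles $C_1,\dots,C_w$. This decomposition gives the additive relations $\mathfrak{D}^*_i(n) = l(\gamma_0) + \sum_{j} l(C_j)$ and $n = \sigma_s(\gamma_0) + \sum_{j}\sigma_s(C_j)$, since each edge of weight $\sigma$ advances exactly $\sigma$ time steps. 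Subtracting $n\cdot d_r$ and regrouping yields
\[
\mathfrak{D}^*_i(n) - n\, d_r = \bigl(l(\gamma_0) - \sigma_s(\gamma_0)\, d_r\bigr) + \sum_{j}\bigl(l(C_j) - \sigma_s(C_j)\, d_r\bigr).
\]
Now Theorem \ref{thm:main} identifies $d_r = \max_{\vartheta} l(\vartheta)/\sigma_s(\vartheta)$, and each cycle satisfies $\sigma_s(C_j) > 0$ (by condition (3) of Definition \ref{def:cyc} together with the positive-orientation convention), so $l(C_j) \leq \sigma_s(C_j)\, d_r$ and every bracketed cycle term is $\leq 0$. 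Therefore $\mathfrak{D}^*_i(n) - n\, d_r \leq l(\gamma_0) - \sigma_s(\gamma_0)\, d_r \leq \max_{\gamma \in \xi(\mathcal{G}_c)} \bigl(l(\gamma) - \sigma_s(\gamma)\, d_r\bigr)$, and taking the supremum over $i,n$ closes the argument.

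The main obstacle I anticipate is justifying the decomposition step cleanly: I must check that stripping directed cycles leaves $\gamma_0$ as a genuine element of $\xi(\mathcal{G}_c)$ (still starting at an input node and ending at an output node, and nonempty because input and output nodes differ), and that every removed closed walk really splits into directed cycles of strictly positive weight sum. Both points rest on structural facts already secured in the proof of Theorem \ref{thm:main} and on the acyclicity of $\mathcal{G}_{\tmop{un}}$ from Lemma \ref{prop:dag}, so the residual work is bookkeeping rather than a new idea.
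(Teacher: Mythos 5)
Your proposal is correct and follows essentially the same route as the paper: both project a longest input--output path of $\mathcal{G}_{\tmop{un}}$ onto $\mathcal{G}_c$, reuse the path-plus-cycles decomposition from the proof of Theorem \ref{thm:main}, bound each cycle's contribution by $l(C_j) \leq \sigma_s(C_j)\, d_r$, and realize equality by unfolding the maximizing path $\gamma_0$ back into $\mathcal{G}_{\tmop{un}}$. The only difference is organizational — you split the argument into two explicit inequalities, whereas the paper proves the upper bound and then notes the equality is attained — which is presentation, not substance.
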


\begin{proof}
Let $\gamma : \{ \nobracket (t_0, 0) \nocomma, (t_1,
  p_1), \cdots, (t_{n_\gamma}, p) \} \nobracket$ be a path in $\mathcal{G}_{\tmop{un}}$ from an input node $(t_0, 0)$ to an output node $(t_{n_{\gamma}}, p)$, where $t_0 = i$ and $t_{n_\gamma} = i+n$. We denote $\bar{\gamma}$
  as the image of $\gamma$ on the cyclic graph. 
From the proof of Theorem \ref{thm:main}, for each $\bar{\gamma}$ in $\mathcal{G}_{c}$, we can decompose it into a path $\gamma_0$ and a sequence of directed cycles $C = C_1 (\gamma), C_2 (\gamma), \cdots, C_w (\gamma)$ on
  $\mathcal{G}_c$ satisfying those properties listed in Theorem \ref{thm:main}.
We denote $l_0, l_1, \cdots, l_w$ to be the length of
  path $\gamma_0$ and directed cycles $C_1 (\gamma), C_2 (\gamma), \cdots, C_w
  (\gamma)$. We know $\frac{l_k}{\sigma_s(C_k)} \leq d_r$ for all $k = 1,2,\dots,w$ by definition. Thus,
\begin{align*}
    l_k \leq & \:d_r\cdot \sigma_s(C_k) \\
    \sum_{k=1}^w l_k \leq & \:d_r\cdot \sum_{k=1}^w  \sigma_s(C_k) 
\end{align*}
Note that $n = \sigma_s(\gamma_0) + \sum_{k=1}^w  \sigma_s(C_k) $. Therefore,
\begin{align*}
    l(\gamma) - n\cdot d_r = &\:l_0 +\sum_{k=1}^w l_k- n\cdot d_r \\\leq & \:l_0+d_r\cdot (\sum_{k=1}^w  \sigma_s(C_k)-n)
    \\= & \: l_0- d_r \cdot \sigma_s(\gamma_0)
\end{align*}
for all time step $i$ and all integer $n$. The above inequality suggests that in order to take the supremum over all paths in $\mathcal{G}_{\tmop{un}}$, it suffices to take the maximum over a directed path in $\mathcal{G}_c$. On the other hand, the equality can be achieved simply by choosing the corresponding path of $\gamma_0$ in $\mathcal{G}_{\tmop{un}}$. The desired conclusion then follows immediately.

\end{proof}

Lastly, we show Theorem \ref{thm:rsc}.

\begin{thm}
\label{thm:main2}
  Given an RNN cyclic graph and its unfolded representation $(\mathcal{G}_c,
  \mathcal{G}_{\tmop{un}})$, we denote $C (\mathcal{G}_c)$ the set of directed
  cycles in $\mathcal{G}_c$. For $\vartheta \in C (\mathcal{G}_c)$, denote $l
  (\vartheta)$ the length of $\vartheta$ and $\sigma_s (\vartheta)$ the sum of
  $\sigma$ along $\vartheta$. Write $s_i = \lim \inf_{k \rightarrow \infty}
  \frac{\mathfrak{d}_i (n)}{n}$. We have :
  \begin{itemize}
    \item  The quantity $s_i$ is periodic, in the sense that $s_{i + m} = s_i,
    \forall i \in \mathbb{N}$.
    
    \item Let $s = \min_i s_i$, then
    \begin{equation}
    \label{eqn:main2}
        d_r = \min_{\vartheta \in C (\mathcal{G}_c)} \frac{l
       (\vartheta)}{\sigma_s (\vartheta)}.
    \end{equation}
  \end{itemize}
\end{thm}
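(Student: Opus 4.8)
The plan is to mirror the proof of Theorem \ref{thm:main} almost verbatim, interchanging the roles of \emph{longest} and \emph{shortest} paths, of $\limsup$ and $\liminf$, and of $\max$ and $\min$, while reversing each inequality accordingly. Write $r_{\min} = \min_{\vartheta \in C(\mathcal{G}_c)} \frac{l(\vartheta)}{\sigma_s(\vartheta)}$ for the right-hand side of Eq.(\ref{eqn:main2}); note that every $\sigma_s(\vartheta) > 0$ by Condition (3) of Definition \ref{def:cyc} together with the unidirectionality convention, so each ratio is well defined and the minimum is attained. The periodicity claim $s_{i+m} = s_i$ is immediate: by the $m$-periodicity of $\mathcal{G}_{\tmop{un}}$, every path from time $i$ to $i+n$ is carried by an isomorphism (preserving length) to a path from $i+m$ to $i+m+n$, so $\mathfrak{d}_i(n) = \mathfrak{d}_{i+m}(n)$, and passing to the $\liminf$ gives the first bullet.

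For the inequality $s \leq r_{\min}$, I would set $\vartheta_0 = \argmin_{\vartheta} \frac{l(\vartheta)}{\sigma_s(\vartheta)}$ and pick a node $c_1 = (t_1, p_1)$ in $\mathcal{G}_{\tmop{un}}$ whose image $\overline{c_1} = (\overline{t_1}, p_1)$ lies on $\vartheta_0$. Following $\vartheta_0$ exactly $k$ times starting from $c_1$ produces a genuine path in the DAG $\mathcal{G}_{\tmop{un}}$ (time indices strictly increase, so no node repeats) of length $k\,l(\vartheta_0)$ spanning $k\,\sigma_s(\vartheta_0)$ time steps. Since $\mathfrak{d}$ records the \emph{shortest} such path, $\mathfrak{d}_{\overline{t_1}}(k\,\sigma_s(\vartheta_0)) \leq k\,l(\vartheta_0)$, so along this subsequence $\frac{\mathfrak{d}_{\overline{t_1}}(k\sigma_s(\vartheta_0))}{k\sigma_s(\vartheta_0)} \leq r_{\min}$; taking the $\liminf$ yields $s_{\overline{t_1}} \leq r_{\min}$, and therefore $s = \min_i s_i \leq r_{\min}$.

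For the reverse inequality $s \geq r_{\min}$, i.e.\ $s_i \geq r_{\min}$ for \emph{every} $i$, I would reuse the decomposition from the proof of Theorem \ref{thm:main}: given a shortest path $\gamma$ from time $i$ to $i+n$, its image $\bar{\gamma}$ in $\mathcal{G}_c$ splits into a simple path $\gamma_0$, with bounded displacement $|D(\gamma_0)| \leq M$, together with directed cycles $C_1(\gamma),\dots,C_w(\gamma)$ of lengths $l_1,\dots,l_w$ (and $l_0$ the length of $\gamma_0$). Now each cycle satisfies $l_k \geq r_{\min}\,\sigma_s(C_k)$ by definition of the minimum, which is precisely the inequality pointing the right way for a lower bound. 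Using $n = D(\gamma_0) + \sum_k \sigma_s(C_k)$ and summing gives
\[
\frac{\mathfrak{d}_i(n)}{n} = \frac{l_0 + \sum_k l_k}{n} \geq r_{\min} + \frac{l_0 - r_{\min}\, D(\gamma_0)}{n},
\]
and since $l_0$ and $D(\gamma_0)$ are bounded by constants depending only on $\mathcal{G}_c$, the correction term vanishes as $n\to\infty$, so $\liminf_n \frac{\mathfrak{d}_i(n)}{n} \geq r_{\min}$. Combining the two bounds proves the second bullet.

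The only genuine subtlety I would treat carefully is the asymmetry in how the two bounds quantify over starting times: the lower bound is delivered \emph{uniformly} in $i$ by the decomposition argument, whereas the upper bound only exhibits a single favorable starting time $\overline{t_1}$, and it is precisely the definition $s = \min_i s_i$ that reconciles them at $r_{\min}$. I would also flag, exactly as in Theorem \ref{thm:main} and Proposition \ref{prop:assump}, that $\mathfrak{d}_i(n)$ is defined only when a path from time $i$ to $i+n$ exists, so all limits are understood along the subsequence of $n$ for which $\mathfrak{d}_i(n)$ is defined. No essentially new estimate is required beyond those already established for the recurrent depth; the content is in orienting every inequality consistently with the minimum.
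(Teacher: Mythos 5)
Your proposal is correct and takes exactly the approach the paper intends: the paper's own proof of Theorem \ref{thm:main2} is just the remark that it is ``essentially the same as the proof of the first theorem,'' and your write-up carries out precisely that mirroring of Theorem \ref{thm:main}'s proof (following the optimal cycle $k$ times for the bound at one starting time, and the path-plus-cycles decomposition with bounded $D(\gamma_0)$ for the uniform bound in $i$), with every inequality correctly reversed. Your closing remarks on the asymmetry of quantification over $i$ and on $\mathfrak{d}_i(n)$ being defined only along a subsequence match the paper's own treatment, so there is no gap.
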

\begin{proof}
  The proof is essentially the same as the proof of the first theorem. So we
  omit it here. 
\end{proof}

\begin{proposition}
\label{prop:assump2}
  Given an RNN and its two graph representations $\mathcal{G}_{\tmop{un}}$ and
  $\mathcal{G}_c$, if $\exists \vartheta \in C (\mathcal{G}_c)$ such that
  $(1)$ $\vartheta$ achieves the minimum in Eq.(\ref{eqn:main2}) and $(2)$ the
  corresponding path of $\vartheta$ in $\mathcal{G}_{\tmop{un}}$ visits nodes at every time step,
  then we have
  \[ s = \min_{i \in \mathbb{Z}} \left( \liminf_{n \rightarrow + \infty}
     \frac{\mathfrak{d}_i (n)}{n} \right) = \lim_{n \rightarrow + \infty}
     \frac{\mathfrak{d}_i (n)}{n}. \]
\end{proposition}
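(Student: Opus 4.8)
The plan is to follow the proof of Proposition~\ref{prop:assump} step for step, but with longest paths replaced by shortest paths and every inequality reversed. By definition $s = \min_i s_i$ with $s_i = \liminf_{n\to\infty}\frac{\mathfrak{d}_i(n)}{n}$, so for every fixed $i$ the bound $\liminf_{n\to\infty}\frac{\mathfrak{d}_i(n)}{n} = s_i \geq s$ holds for free (the minimum is at most each term). Since $\liminf \leq \limsup$ always, it therefore suffices to prove the complementary bound $\limsup_{n\to\infty}\frac{\mathfrak{d}_i(n)}{n} \leq s$ for each $i$: the chain $\limsup \leq s \leq \liminf \leq \limsup$ then forces all quantities to coincide, which simultaneously shows that the limit exists and equals $s$.

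To establish $\limsup_{n\to\infty}\frac{\mathfrak{d}_i(n)}{n}\leq s$, I would fix $i$ and, for each large $n$, exhibit one explicit path from time $i$ to time $i+n$ whose length is at most $ns + O(1)$; because $\mathfrak{d}_i(n)$ is a \emph{minimum} over all such paths, a single path already furnishes a valid upper bound. Let $\vartheta$ be the cycle supplied by the hypothesis, so that $\frac{l(\vartheta)}{\sigma_s(\vartheta)} = s$ and its unfolded image visits a node at every time step. Starting from a node of this unfolded copy at time $i$, I follow $\vartheta$ for $k = \lfloor n/\sigma_s(\vartheta)\rfloor$ full loops, reaching time $i + k\sigma_s(\vartheta)$ after exactly $k\,l(\vartheta)$ edges, and then traverse a prefix of one further loop to land on the node sitting at the exact time $i+n$, which costs at most $l(\vartheta)$ additional edges. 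Hence $\mathfrak{d}_i(n) \leq (k+1)\,l(\vartheta)$, and using $k \leq n/\sigma_s(\vartheta)$,
\begin{equation*}
\frac{\mathfrak{d}_i(n)}{n} \leq \frac{(k+1)\,l(\vartheta)}{n} \leq \frac{l(\vartheta)}{\sigma_s(\vartheta)} + \frac{l(\vartheta)}{n} = s + \frac{l(\vartheta)}{n},
\end{equation*}
and letting $n\to\infty$ yields the desired bound.

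The main obstacle is the bookkeeping of the partial final loop: I must guarantee that, after $k$ complete traversals of $\vartheta$, the unfolded path genuinely contains a node at the \emph{exact} time $i+n$ and that reaching it adds at most $l(\vartheta)$ edges regardless of how the loop's $\sigma$-weights are distributed among its edges. This is precisely where condition~(2) is indispensable: without the assumption that the unfolded cycle visits a node at every intermediate time step, there need be no vertex of the path at time $i+n$, so the shortest path might be forced onto a longer detour and the clean $s + O(1/n)$ estimate would fail. The periodicity statement $s_{i+m} = s_i$ from Theorem~\ref{thm:main2} reduces everything to finitely many residues of $i$ modulo $m$, but it is routine and I would invoke it only in passing.
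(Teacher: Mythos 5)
Your proposal is correct and takes essentially the same approach as the paper: the paper's own proof of this proposition is just the remark that one mirrors Proposition~\ref{prop:assump}, i.e.\ reduce to the one nontrivial bound $\limsup_{n\to\infty}\mathfrak{d}_i(n)/n \leq s$ and then use condition (2) to exhibit an explicit cycle-following path of length $ns + O(1)$ from time $i$ to time $i+n$, which is exactly what you spell out. Your explicit handling of the final partial loop (and of why condition (2) is indispensable there) is, if anything, more careful than the paper's omitted argument.
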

\begin{proof}
  The proof is essentially the same as the proof of the Proposition \ref{prop:assump}. So we
  omit it here. 
\end{proof}

\newpage

\newpage
\section{Experiment Details}
\subsection{RNNs with $\tanh$}
In this section we explain the functional dependency among nodes in RNNs with $\tanh$ in detail.

The transition function for each node is the $\tanh$ function. The output of a node $v$ is a vector $h_v$. To compute the output for a node, we simply take all incoming nodes as input, and sum over their affine transformations and then apply the $\tanh$ function (we omit the bias term for simplicity).
\begin{equation*}
    h_v = \mathrm{tanh}\left(\sum_{u \in \mathrm{In}(v)}\textbf{W}(u)h_u\right),
\end{equation*}
where $\textbf{W}(\cdot)$ represents a real matrix.

\begin{figure}[htp]
\center
\includegraphics[width=5cm]{./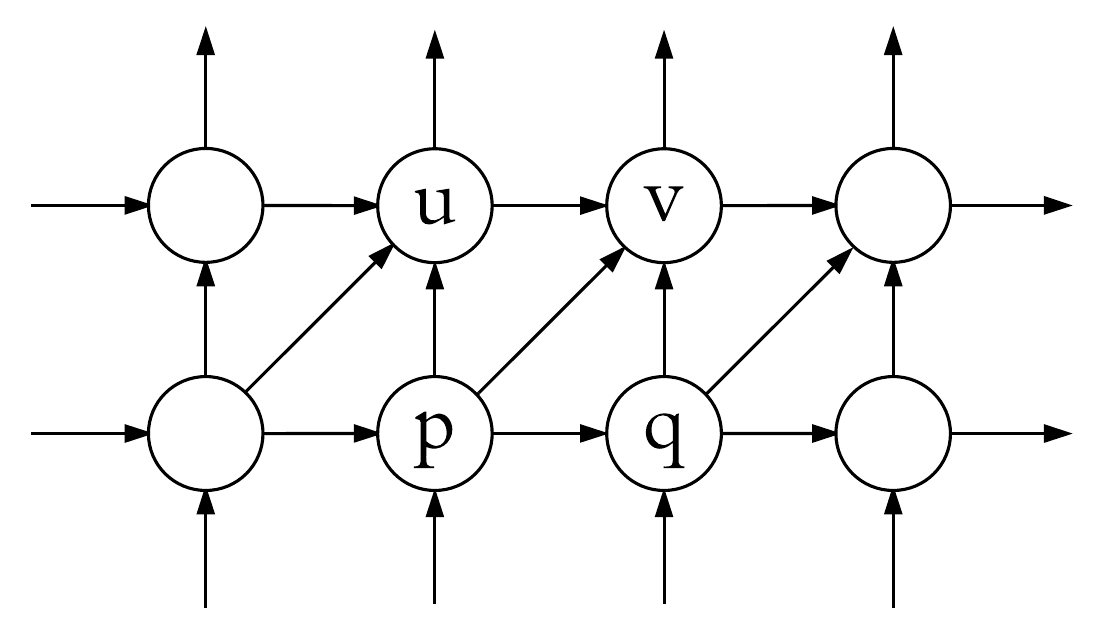}
\vspace{-10pt}
\caption{``Bottom-up'' architecture ($bu$).}
\label{fig:bu_ex}
\end{figure}
As a more concrete example, consider the ``bottom-up'' architecture in Figure \ref{fig:bu_ex}, with which we did the experiment described in Section \ref{sec:depth_nontrivial}. To compute the output of node $v$, 
\begin{equation}
h_v = \mathrm{tanh}(\textbf{W}(u)h_u+\textbf{W}(p)h_p+\textbf{W}(q)h_q).
\end{equation}

\subsection{LSTMs}
\label{sec:LSTM}
In this section we explain the Multidimensional LSTM (introduced by \cite{Graves2007}) which we use for experiments with LSTMs.

The output of a node $v$ of the LSTM is a 2-tuple ($c_v$,$h_v$), consisting of a cell memory state $c_v$ and a hidden state $h_v$. The transition function $F$ is applied to each node indistinguishably. We describe the computation of $F$ below in a sequential manner (we omit the bias term for simplicity). 
\begin{align*}
z &= g\left(\sum_{u \in \mathrm{In}(v)}\textbf{W}_z(u)h_u\right) \qquad &\text{block input} \\
i &= \sigma\left(\sum_{u \in \mathrm{In}(v)}\textbf{W}_i(u)h_u\right) \qquad &\text{input gate} \\
o &= \sigma\left(\sum_{u \in \mathrm{In}(v)}\textbf{W}_o(u)h_u\right) \qquad &\text{output gate} \\
\{f_u\} &= \left\{\sigma\left(\sum_{u' \in \mathrm{In}(v)}\textbf{W}_{f_u}(u')h_u\right)|u \in \mathrm{In}(v)\right\} \qquad &\text{A set of forget gates}\\
c_v &= i \odot z + \sum_{u \in \mathrm{In}(v)} f_u \odot c_u  \qquad &\text{cell state} \\
h_v &= o \odot c_v&  \qquad  \text{hidden state}
\end{align*}

Note that the Multidimensional LSTM includes the usual definition of LSTM as a special case, where the extra forget gates are 0 (i.e., bias term set to -$\infty$) and extra weight matrices are 0. We again consider the architecture $bu$ in Fig. \ref{fig:bu_ex}. We first compute the block input, the input gate and the output gate by summing over all affine transformed outputs of $u,p,q$, and then apply the activation function. For example, to compute the input gate, we have
\begin{equation*}
    i = \sigma\left(\textbf{W}_i(u)h_u+\textbf{W}_i(p)h_p+\textbf{W}_i(q)h_q\right).
\end{equation*}
Next, we compute one forget gate for each pair of $(v,u), (v,p), (v,q)$. The way of computing a forget gate is the same as computing the other gates. For example, the forget gate in charge of the connection of $u \to v$ is computed as,
\begin{equation*}
    f_u= \sigma\left(\textbf{W}_{f_u}(u)h_u+\textbf{W}_{f_u}(p)h_u+\textbf{W}_{f_u}(q)h_u\right).
\end{equation*}
Then, the cell state is simply the sum of all element-wise products of the input gate with the block output and forget gates with the incoming nodes' cell memory states,
\begin{equation*}
    c_v = i \odot z + f_u \odot c_u+f_p \odot c_p+f_q \odot c_q.
\end{equation*}
Lastly, the hidden state is computed as usual,
\begin{equation*}
h_v = o \odot c_v.
\end{equation*}


\subsection{Recurrent Depth is Non-trivial}
\label{sec:exp_dr_nontrivial}
The validation curves of the 4 different connecting
architectures $sh$, $st$, $bu$ and $td$ on text8
dataset for both $\tanh$RNN-small and LSTM-small are shown below:
\begin{figure}[htp]
\center
\includegraphics[width=350pt]{./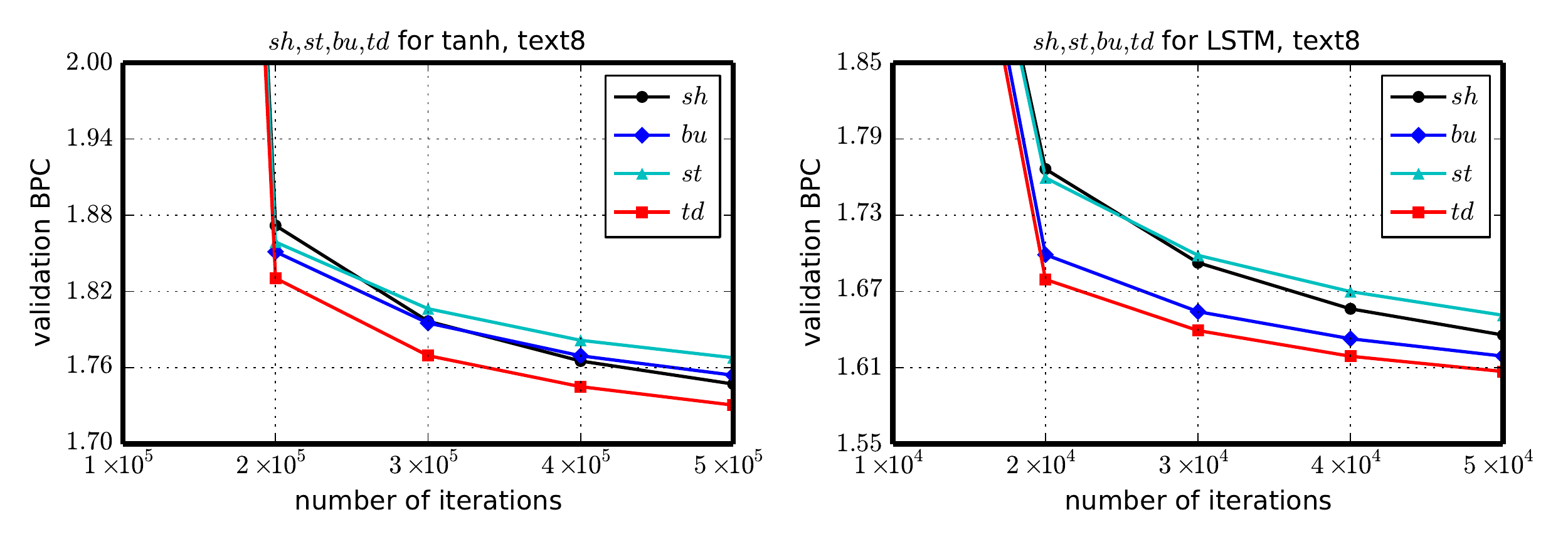}
\vspace{-10pt}
\caption{Validation curves for $sh$, $st$, $bu$ and $td$ on test8 dataset.
Left: results for $\tanh$RNN-small. Right: results for LSTM-small.}
\label{fig:butd_ex}
\end{figure}

\subsection{Full Comparisons on Depths}
\label{sec:full_com}
Figure \ref{fig:3x3lm_ex}
shows all the validation curves for the 9 architectures on text8 dataset, with
their $d_r = 1, 2, 3$ and $d_f = 2, 3, 4$ respectively.
We initialize hidden-to-hidden
matrices from uniform distribution.

\begin{figure}[htp]
\center
\includegraphics[width=\textwidth]{./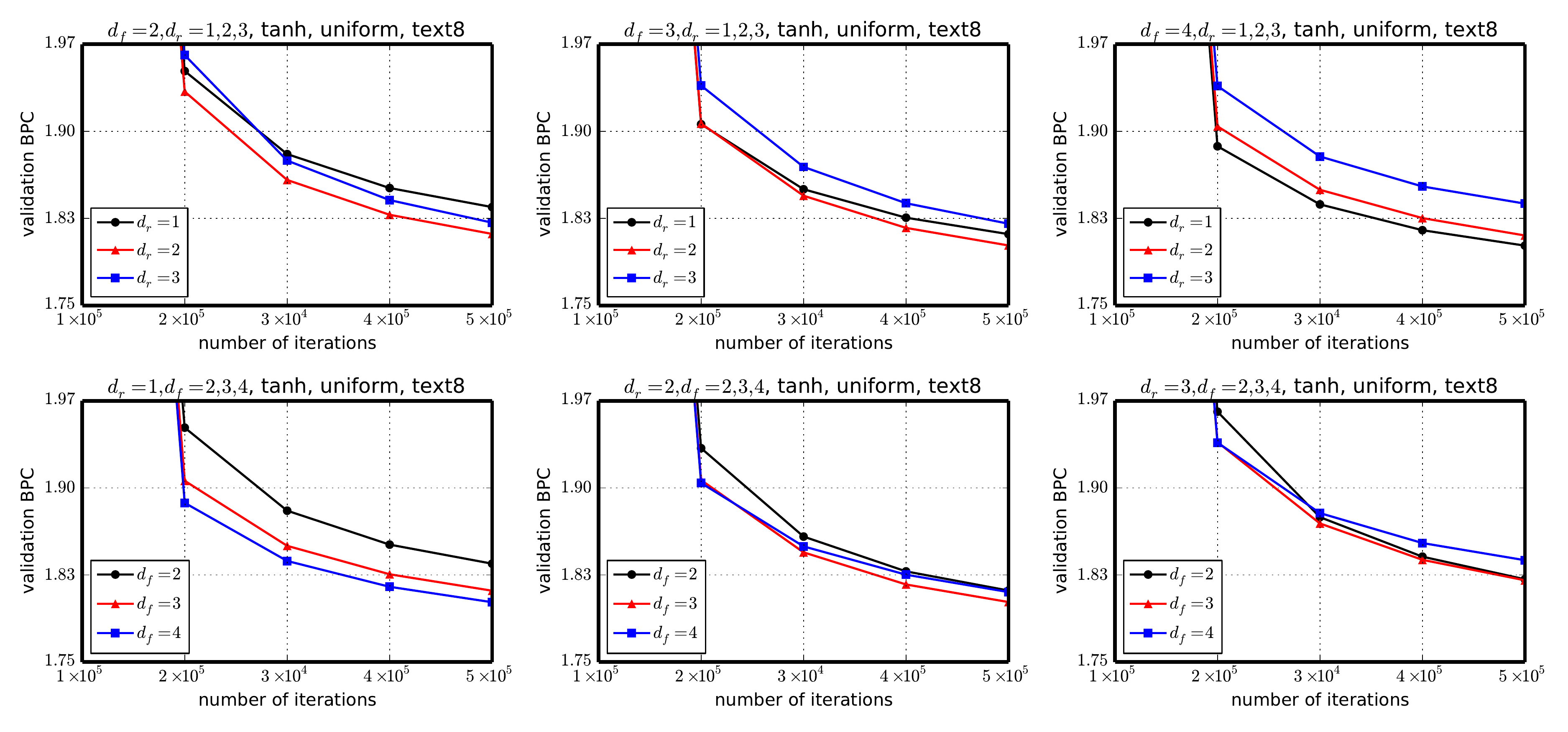}
\vspace{-10pt}
\caption{Validation curves of 9 architectures with feedforward depth $d_f = 2, 3, 4$
and recurrent depth $d_r = 1, 2, 3$ on test8 dataset.
For each figure in the first row, we fix $d_f$ and draw $3$ curves with different $d_r=1, 2, 3$.
For each figure in the second row, we fix $d_r$ and draw $3$ curves with different $d_f=2, 3, 4$.
}
\label{fig:3x3lm_ex}
\end{figure}

%

Also, to see if increasing feedforward depth/ recurrent depth helps
for long term dependency problems, 
we evaluate these 9 architectures on sequential MNIST task, with roughly the
same number of parameters(~8K, where the first architecture with $d_r=1$ and $d_f=2$ has hidden size of 90.). 
Hidden-to-hidden matrices
are initialized from uniform distribution.

\begin{figure}[h]
\center
\includegraphics[width=\textwidth]{./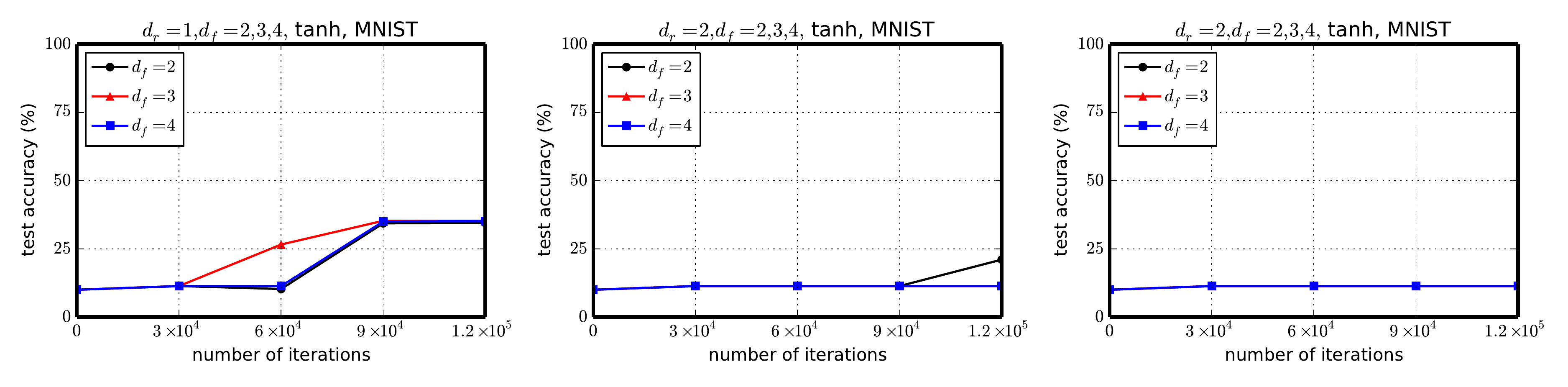}
\vspace{-10pt}
\caption{Test accuracies of 9 architectures with feedforward depth $d_f = 2, 3, 4$
and recurrent depth $d_r = 1, 2, 3$ on sequential MNIST. For each figure,
we fix $d_r$ and draw $3$ curves with different $d_f$.
}
\label{fig:3xxmnist_ex}
\end{figure}

Figure \ref{fig:3xxmnist_ex}
clearly show that, 
as the feedforward depth
increases, the model performance stays roughly the same.
In addition, note that increasing recurrent depth might even result in performance decrease. 
This is possibly because that larger recurrent depth amplifies the gradient
vanishing/exploding problems, which is detrimental on long term
dependency tasks.

\newpage
\subsection{Recurrent Skip Coefficients}

The test curves for all the experiments are shown in Figure \ref{fig:mnist_ex}. 
In Figure \ref{fig:mnist_ex}, we observed that
obtaining good performance on MNIST requires larger $s$ than for pMNIST.
We hypothesize that this is because, for the sequential MNIST dataset, 
each training example contains many consecutive zero-valued subsequences, each of
length $10$ to $20$. Thus within those subsequences, the input-output gradient flow
could tend to vanish. However, when the recurrent skip coefficient is large enough to cover those
zero-valued subsequences, the model starts to perform better.
With $p$MNIST, even though the random permuted order seems harder to learn, the permutation on the other hand
blends zeros and ones to form more uniform sequences, and this may explain why training is easier, less hampered
by by the long sequences of zeros.

\subsection{Recurrent Skip Coefficients vs. Skip Connections}
Test curves for all the experiments are shown in Figure \ref{fig:verify_ex}.
Observe that in most cases,
the test accuracy of (3) is worse than (2) in the beginning
while beating (2) in the middle of the training.
This is possibly because in the first several time steps, it is easier
for (2) to pass information to the output thanks to the skip connections,
while only after multiples of $k$ time steps, (3) starts to show its
advantage with recurrent skip connections\footnote{It will be more clear if one checks the length
of the shortest path from an node at time $t$ to to a node at time $t+k$ in both architectures.}.
The shorter paths in (2) make its gradient flow more easily in the beginning,
but in the long run, (3) seems to be more superior, because of its more prominent skipping effect over time. 

\begin{figure}[htp]
\center
\includegraphics[width=350pt]{./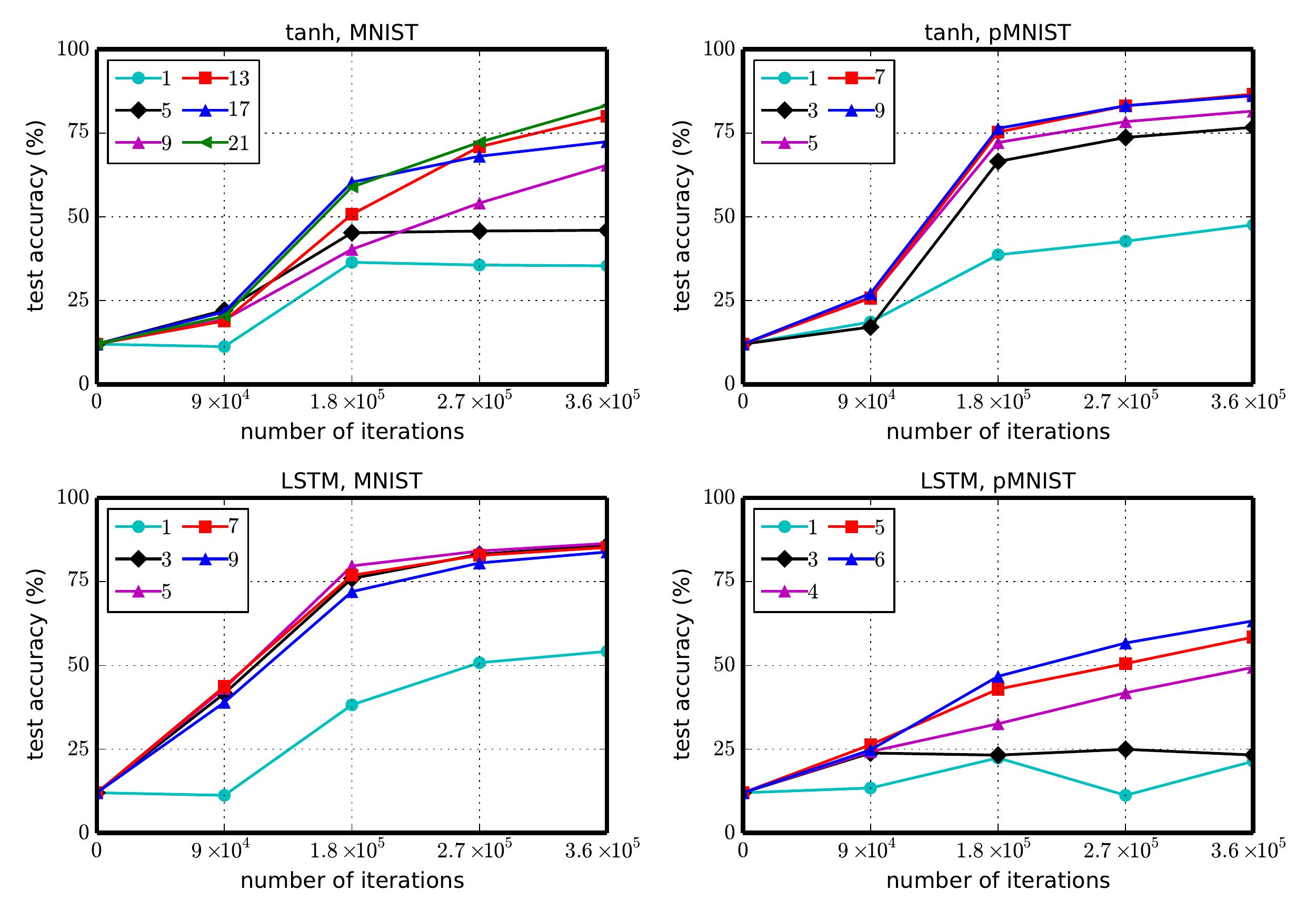}
\vspace{-10pt}
\caption{Test curves on MNIST/$p$MNIST, with $\tanh$ and $LSTM$. The numbers in the legend
denote the recurrent skip coefficient $s$ of each architecture.}
\label{fig:mnist_ex}
\end{figure}

\begin{figure}[htp]
\center
\includegraphics[width=350pt]{./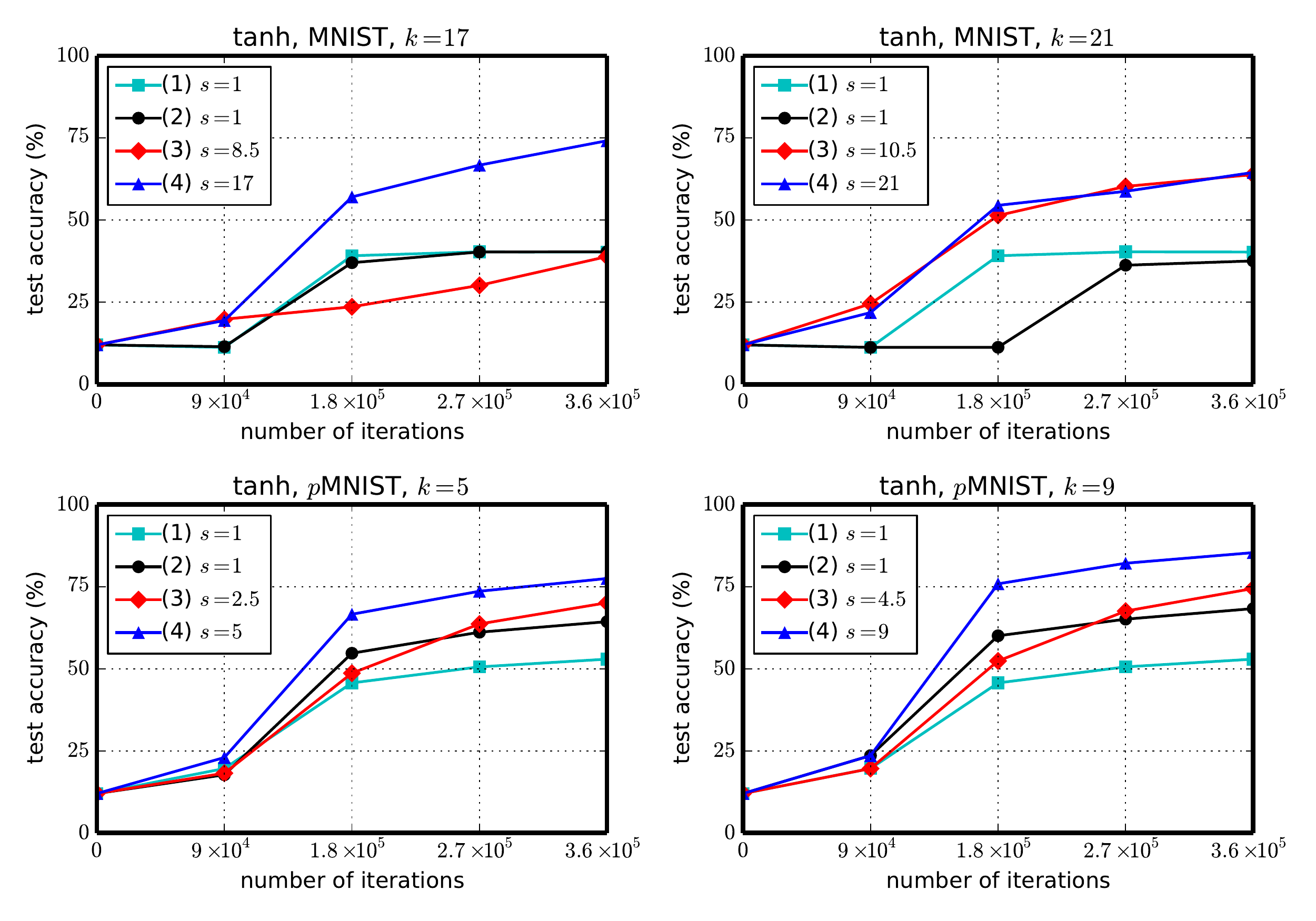}
\vspace{-10pt}
\caption{Test curves on MNIST/$p$MNIST for architecture (1), (2), (3) and (4), with $\tanh$.
The recurrent skip coefficient $s$ of each architecture is shown in the legend.}
\label{fig:verify_ex}
\end{figure}


{\small 
\bibliography{1recdepth}
\bibliographystyle{unsrt}
}
\appendix


\end{document}